\documentclass[conference]{IEEEtran}
\IEEEoverridecommandlockouts
\usepackage{cite}
\usepackage{amsmath,amssymb,amsfonts}
\usepackage{mathtools}
\usepackage{amsthm}
\usepackage{graphicx}
\usepackage{textcomp}
\usepackage{microtype}
\usepackage{float}
\usepackage{subfigure}
\usepackage{booktabs} 
\usepackage{epstopdf}
\usepackage{stfloats}
\usepackage{multirow}
\usepackage{makecell}
\usepackage{colortbl}
\usepackage[dvipsnames]{xcolor}
\usepackage{algorithm}
\usepackage{algorithmic}

\usepackage{hyperref}


\def\BibTeX{{\rm B\kern-.05em{\sc i\kern-.025em b}\kern-.08em
    T\kern-.1667em\lower.7ex\hbox{E}\kern-.125emX}}
    
\newcommand{\ie}{\emph{i.e.}}
\newcommand{\eg}{\emph{e.g.}}
\newcommand{\etal}{{\em et al. }}
\newtheorem{assumption}{Assumption}

\newtheorem{lemma}{Lemma}
\newtheorem{theorem}{Theorem}

\newtheorem{claim}{Claim}
\theoremstyle{plain}

\newcommand{\para}[1]{\noindent {\bf #1}}

\newcommand{\ryan}[1]{\textcolor{purple}{[Ryan: #1]}}

\newcommand{\sys}{WPM}
\newcommand{\accLossScale}{0.29}
\newcommand{\convergenceSpeedScale}{0.32}
\newcommand{\picRatio}{.22}
\newcommand{\tableCellWidth}{1.6cm}
\newcommand{\conSpeedCellWidth}{1.89cm}
\newcommand{\disTableCellWidth}{1cm}

\DeclareMathOperator*{\argmin}{arg\,min}

\begin{document}

\title{Achieving Efficient Distributed Machine Learning \\ Using a Novel Non-Linear Class of Aggregation Functions}


\author{\IEEEauthorblockN{1\textsuperscript{st} Haizhou Du}
\IEEEauthorblockA{\textit{Shanghai University of Electric Power} \\
}
\and
\IEEEauthorblockN{2\textsuperscript{nd} Ryan Yang}
\IEEEauthorblockA{\textit{Choate Rosemary Hall} \\
}
\and
\IEEEauthorblockN{3\textsuperscript{rd} Yijian Chen}
\IEEEauthorblockA{\textit{Shanghai University of Electric Power} \\
}
\and
\IEEEauthorblockN{4\textsuperscript{th} Qiao Xiang}
\IEEEauthorblockA{\textit{Xiamen University} \\
}
\and
\IEEEauthorblockN{5\textsuperscript{th} WIbisono Andre}
\IEEEauthorblockA{\textit{Yale University.} \\
}
\and
\IEEEauthorblockN{6\textsuperscript{th} Wei Huang}
\IEEEauthorblockA{\textit{Shanghai University of Electric Power} \\
}
}

\maketitle

\begin{abstract}
Distributed machine learning (DML) over time-varying networks can be an enabler for emerging decentralized ML applications such as autonomous driving and drone fleeting. However, the commonly used weighted arithmetic mean model aggregation function in existing DML systems can result in high model loss, low model accuracy, and slow convergence speed over time-varying networks. To address this issue, in this paper, we propose a novel non-linear class of model aggregation functions to achieve efficient DML over time-varying networks. Instead of taking a linear aggregation of neighboring models as most existing studies do, our mechanism
    uses a nonlinear aggregation,
    a weighted power-$p$  mean (WPM) where $p$ is a positive integer, as the aggregation function of local models from neighbors. The subsequent optimizing steps are taken using mirror descent defined by a Bregman divergence that maintains convergence to optimality. In this paper, we analyze properties of the WPM and rigorously prove convergence properties of our aggregation mechanism. Additionally, through extensive experiments, we show that when $p>1$, our design significantly improves the convergence speed of the model and the scalability of DML under time-varying networks compared with arithmetic mean aggregation functions, with  little additional computation overhead. 
\end{abstract}

\begin{IEEEkeywords}
Distributed Machine Learning, Non-Linear Aggregation Function, Time-varying, Weighted Power Mean
\end{IEEEkeywords}

\section{Introduction}
\label{sec:intro}

Edge devices (\eg, cell phones, tablets and cameras) generate massive amounts of data.
A survey from Cisco~\cite{cisco2020} estimates that by 2021, almost 850 ZB (ZettaBytes) of data will be generated by edge devices. Because of resource and privacy constraints, gathering all data for centralized processing can be impractical.  Distributed machine learning (DML) has emerged as a powerful solution that is adept at analyzing and processing such data to support data-driven tasks (\eg, autonomous driving, virtual reality and smart grid). For example, Tesla motors use GPUs as computer vision accelerators for its full self-driving in real-time scenario \cite{tesla2020}. By having devices perform training tasks locally and then aggregating the local models, DML let devices collaboratively build a global model, and preserves the users' privacy for edge AI~\cite{koloskova2019decentralized}.

To avoid the scalability bottleneck and single point of failure issues brought by using a parameter server~\cite{smola2010architecture}, recent DML systems propose to use decentralized architectures (\eg, All-Reduce~\cite{allreduce2010}, Ring-Reduce~\cite{lee2020tornadoaggregate} and Gossip~\cite{hegedHus2019gossip}), where devices share their local models with neighbors, and aggregate received local models. For example,   
Xiao \etal propose a linear iteration to yield distributed averaging consensus over a network based on the Laplacian matrix of the associated graph \cite{1272421}. Decentralized Stochastic Gradient Descent (DSGD)  methods propose to trade the exactness of the averaging provided by All-Reduce~\cite{candecentralizedoutperform,hegedHus2019gossip,nedic2020distributed} for better scalability. 
Heged \etal propose a gossip-based server architecture to solve decentralized learning in mobile computing ~\cite{hegedHus2019gossip}.

Despite substantial efforts from existing literature, they ignore the \textit{topology dynamicity}, a fundamental challenge for applying DML in edge computing. Specifically, edge devices usually exchange local models through wireless communication, which is highly dynamic. As a result, two devices may be able to exchange local models in one iteration, but lose the connection in the next iteration. Moreover, doing so drastically slows down the convergence speed because the communication is bottlenecked by the limited wide-area or mobile edge network bandwidth. Specifically, when the training process iterates  over time-varying  networks~\cite{kovalev2021linearly, pmlrv139kovalev21a}, 
this often leads to a severe drop in the training and test performance (\ie, generalization gap), even after hyper-parameter fine-tuning. 
 Although some papers have proposed a linear class of aggregation functions for DML over time-varying networks and proven their convergence~\cite{nedic2020distributed,neglia2020decentralized,pmlr-v119-eshraghi20a,pmlrv139kovalev21a}, the convergence speed still can not match the requirements of emerging applications.

To address this challenge, we propose a novel non-linear model aggregation mechanism to achieve efficient DML over time-varying networks. We make a key observation that existing DML systems exclusively use the linear weighted-mean of neighboring local models as their aggregation functions, with different ways of assigning the appropriate weights for averaging local models.
In contrast, instead of fine-tuning the weighting of local models during aggregation or designing  delicate and complex
architecture, our mechanism takes a weighted power-$p$ mean (WPM) of local models from neighbors when $p$ is odd, and a modified version for even $p$. Then, the local model takes a mirror descent step, which is a generalization of normal gradient descent with the Bregman divergence as distance-measuring function.  The core insight of the WPM mechanism is that 
the variance among the local models of  devices estimates is lower when $p$ is higher. When the variance is lower, DML more efficiently simulates the centralized gradient descent method. And the effect of this insight will be further amplified in time-varying, sparse networks, which are common in edge computing. Results of extensive experiments show that by using WPM as the aggregation function, the convergence speed of DML model over time-varying networks accelerates up to 62\%, and the accuracy of DML model is also higher than that achieved by using the commonly used linear arithmetic mean aggregation in terms of the scalabilty metric. 

The \textbf{main contributions} of this paper are as follows:

\begin{itemize}
\setlength\itemsep{0em}
    \item We study the important issue of DML over time-varying networks, which can help boost the performance of decentralized edge applications such as autonomous driving, drone fleeting, and mobile image classification. 
	\item We propose a novel model aggregation mechanism that improves the convergence speed and scalability by taking a weighted power mean of local models from neighbors based on approximate mirror descent. To the best of our knowledge, we are the first to propose such a non-linear model aggregation mechanism. 
	
	\item We conduct extensive experiments to evaluate the performance of our mechanism. Experimental results show that WPM mechanism accelerates the convergence speed by up to 62\%, improves model accuracy with scalability metric over time-varying networks
	than the state-of-the-arts.
\end{itemize}

The rest of this paper is organized as follows. Section \ref{sec:design} describes mechanism design and WPM definition. In section \ref{sec:analysis}, we discuss properties/observations relevant to WPM. Section \ref{sec:evaluations} gives a comprehensive evaluation of our work. Section \ref{sec:related} describes the related works and we conclude the paper in Section \ref{sec:con}.

\section{Related Work}
\label{sec:related}

\para{DML for Edge Computing.} 
Many distributed machine learning models and its variants have been proposed to process huge amount of data locally in edge computing \cite{verbraeken2020survey}. 
For example, 
Donahue \etal~\cite{Donahue_Kleinberg_2021} propose a  federated learning framework of coalitional game theory to eliminate the bias of a global aggregation model. Lee \etal ~\cite{lee2020flexreduce} propose a FlexReduce method on asymmetric network topology to speed up convergence. Wang \etal~\cite{8664630} propose a control algorithm to  tradeoff between local update and global parameter aggregation under a given resource budget. Huang \etal~\cite{citekey} propose a distributed deep learning-based offloading algorithm for MEC networks. Ghosh \etal ~\cite{Efficient2020} propose a framework of iterative federated clustering learning. 
Li \etal \cite{parameterserver2014} propose a parameter-server-based architecture for DML. 
Lin \etal propose a novel momentum-based method to mitigate this decentralized training difficulty ~\cite{pmlr-v139-lin21c}.
However, they assume a fixed topology and are not suitable for edge computing under network dynamics. 


\para{DML with Topology Dynamicity.}
Kovalev \etal propose the ADOM method for decentralized optimization over time-varying networks with projected Nesterov gradient descent~\cite{pmlrv139kovalev21a}. Koloskova \etal introduce a framework covering local SGD updates and synchronous and pairwise gossip updates on adaptive network topology~\cite{pmlr-v119-koloskova20a}. Eshraghi \etal propose a DABMD algorithm for decentralized learning by varying mini-batch sizes across time-varying topology~\cite{pmlr-v119-eshraghi20a}. Pu \etal consider new distributed gradient-based methods with an estimate of the optimal decision variable and an estimate of the gradient for the average of the agents' objective functions~\cite{pu2020push}. Nedic \etal \cite{nedic2009distributed,nedic2020distributed} tackle the DML with topology dynamicity from the consensus perspective. They propose a model aggregation method for agents in a network with a time-varying topology to collaboratively solve a convex objective function, with a convergence guarantee. However, our preliminary results show that their methods suffers from lower accuracy, higher loss and slower convergence speed under time-varying networks compared to under a fixed topology. 
Additionally, their methods use linear averages for aggregation, which is a special case of our framework.

\section{Design}
\label{sec:design}


\subsection{Problem Formulation}
We consider the scenario of DML on a total of $m$ devices over time-varying networks, without the existence of a centralized parameter server. Such time-varying networks are ubiquitous in many complex systems and practical applications (\eg, sensor networks, next generation federated learning systems), where the communication pattern among pairs of mobile devices will be influenced by their physical proximity, which naturally changes over time~\cite{zadeh1961time,kolar2010estimating,pmlrv139kovalev21a}.  The connection between any two devices $i$ and $j$ is dynamic but symmetric. In other words, in the same epoch, if $i$ can send information to $j$, $j$ must be able to send information to $i$ as well; if $i$ cannot send information to $j$, then neither can $j$ to $i$. Each device $i$ has its own dataset and loss function $F_i$. 
Denote the model parameters as a vector $\mathbf{w} \in \mathbb{R}^n$. Devices aim to collaboratively minimize the following objective function:
\begin{equation}
\label{eq:problem}
 \text{minimize } f(\mathbf{w}) = \sum_{i=1}^m F_i(\mathbf{w}), \text{ subject to } \mathbf{w}\in \mathbb{R}^n,
\end{equation}
without sharing any data among devices. 

\subsection{Mechanism Overview}

After presenting the problem formulation, we propose our mechanism to solve it efficiently under dynamic topologies. Algorithm \ref{alg:new} describes the overall training process of our mechanism on each device.

Specifically, in each iteration $t$, each device $i$ uses its local dataset to compute its local gradient $d_i(t)$. It then checks its wireless connections with other devices, and sends to each connected device $j$ the following information: (1) $\mathbf{w}_i(t)$, the local model parameter of $i$ at the beginning of iteration $t$; and (2) $e_{i, j}(t) = \frac{1}{N_i(t)+1}$ for all devices $j$ connected to $i$, where $N_i(t)$ is the number of devices $i$ connected to at iteration $t$. After all information is exchanged, for each connected device $j$, device $i$ assigns an aggregation weight $\alpha_{i, j}(t) = min\{e_{i, j}(t), e_{j, i}(t)\}$ for the local model $\mathbf{w}_j(t)$. It then assigns an aggregation weight $\alpha_{i, i}(t) = 1 - \sum\nolimits_{i \neq j}\alpha_{i, j}(t)$ for its own local model, and an aggregation weight $\alpha_{i, j}(t) = 0$ for all devices it does not have a connection to in iteration $t$. Denote the learning rate for device $i$ at iteration $t$ as $\eta_i(t)$. The local model for device $i$ for iteration $t+1$ is then updated as the weighted power-$p$ mean of the local models $\mathbf{w}_j(t)$ of all devices $j=1, \ldots, m$ with weights $\alpha_{i,j}(t)$:
\begin{equation}
\begin{aligned}
\label{eq:model}
\mathbf{w}_i(t+1) = \lbrack (\sum\nolimits_{j=1}^m \alpha_{i, j}(t) \mathbf{w}_j(t)^p ) - \eta d_i(t)\rbrack^{1/p},
\end{aligned}
\end{equation}
where $p$ is an integer. For vectors $\textbf{v}$, we define $\textbf{v}^C$ to be the element wise application of $x\to sgn(x)\cdot |x|^C$. Alternatively, let $\textbf{v}^C = \nabla h(\textbf{v})$ where $h(\textbf{v}) = \frac{1}{C+1}\lVert \textbf{v}\rVert_{C+1}^{C+1}$
In the case that the gradient step is zero, this makes $\mathbf{w}_i(t+1)$ the WPM of its neighbor's estimates at time $t$, which is different from the usual linear aggregation. 

Note that when $p=1$, the WPM aggregation function in our mechanism reduces to the typical weighted linear mean aggregation function commonly used by classical distributed machine learning systems:
\begin{equation}
\label{eq:classical}
\mathbf{w}_i(t+1) = \sum\nolimits_{j=1}^m \alpha_{i, j}(t) \mathbf{w}_j(t) - \eta(t) d_i(t),
\end{equation}
such as~\cite{candecentralizedoutperform,hegedHus2019gossip,nedic2020distributed,neglia2020decentralized}.

\begin{algorithm}[t]
\caption{The training procedure of WPM mechanism for each device.}
\label{alg:new}
\begin{algorithmic}[1]
\STATE \textbf{Input: }  the dataset $D_i$, the initial model parameters $\mathbf{w}_i(0)$, and the learning rate $\eta_i(t)$ for each device $i$, $t=0, \ldots, T$.
\STATE \textbf{Output: } The final model parameters of all devices after $T$ iterations $\mathbf{w}_i(t)$.
\FOR{$t=0$ to $T$}
\FOR{$i=1$ to $m$}
\STATE Compute the local gradient $d_i(t)$.
\STATE Check wireless connections to other devices and count the number of connected devices as $N_i(t)$.
\STATE Send $\mathbf{w}_i(t)$ and $e_{i,j} (t)=1/(N_i(t)+1)$ to all connected devices $j$.
\STATE Assign $\alpha_{i,j}(t)$ for all devices $j$, $j=1, \ldots, m$.
\STATE Compute $\mathbf{w}_i(t+1)$ using Equation (\ref{eq:model}).
\ENDFOR

\ENDFOR
\STATE Return $\mathbf{w}_i(T)$.
\end{algorithmic}
\end{algorithm}

\section{Convergence Analysis}
\label{sec:analysis}
Firstly, we present several assumptions needed to prove convergence.
\begin{assumption}
\label{assumption:network}
The network $G= (V,E_t)$ and the weight matrix $P(t)$ satisfy the following~\cite{yuan2020distributed} 
\begin{itemize}
    \item $P(t)$ is doubly stochastic for all $t\geq 1$, that is $\sum_{j=1}^m [P(t)]_{ij}=1$ and $\sum_{i=1}^m [P(t)]_{ij}=1$, for all $i,j\in V$.
    \item There exists a scale $\zeta>0$, such that $[P(t)]_{ii}\geq \zeta$ for all $i$ and $t\geq 1$, and $[P(t)]_{ij}\geq \zeta$, if $\{i,j\}\in E_t$.
    \item There exists an integer $B\geq 1$ such that the graph $(V, E_{kB+1}\cup \cdots \cup E_{(k+1)B})$ is strongly connected for all $k\geq 0$.
\end{itemize}
\end{assumption}
This assumption is very common in the literature~\cite{nedic2009distributed,li2017distributed,nedic2020distributed}
, and is a basic requirement that guarantees a necessary level of communication necessary to optimise based on the loss functions at each device.

\begin{assumption}
Functions $f_{i,t}$  for $1\leq i\leq m$ and $t\geq 0$ are convex. Additionally, they are $G_l-$ Lipschitz~\cite{yuan2020distributed} . That is 
\begin{equation}
    \lVert f_{i,t}(x) - f_{i,t}(y)\rVert \leq G_l \lVert x-y\rVert.
\end{equation}
Additionally, this implies that the gradient of $f_{i,t}$ is always $\lVert \nabla f_{i,t}(x)\rVert \leq G_l$. 
\end{assumption}


Let $w:\mathbb{R}^n \to \mathbb{R}$ be a function that is strictly convex, continuously differentiable, and defined on $\mathbb{R}^n$. Then, the Bregman divergence~\cite{beck2003mirror} is defined as 
\begin{equation}
   D_w(x,y) = w(x)-w(y)-\langle \nabla w(y), x-y \rangle. 
\end{equation}
Since $w$ is convex, $D_w(x,y)\geq 0$, and the Bregman divergence also functions as a distance measuring function.
In the process described in this paper, we take Mirror Descent steps in the special case where $w(x) = \frac1{p+1} \lVert x\rVert_{p+1}^{p+1}$ is the function used to create the distance measuring Bregman divergence function.

Mirror descent has its origins in classical convex optimisation~\cite{nemirovski1979efficient}, while follow the regularised leader goes back to the work of~\cite{gordon1999regret}. The mirror descent algorithm (MDA) was introduced by Nemirovsky and Yudin for solving convex optimization problems~\cite{nemirovskij1983problem}. This method exhibits an efficiency estimate that is mildly dependent in the decision variables dimension, and thus suitable for solving very large scale optimization problems. 

Our algorithm is a natural extension of Mirror Descent to DML~\cite{nemirovskij1983problem,beck2003mirror}. 
Mirror Descent also strives to solve $\min_x f(x)$ for convex $f:\mathbb{R}^n\to \mathbb{R}$. It generates a sequence $\{x_i\}_i$ of estimates where after calculating $x_t$, it generates the estimator function $f^{(t)}(x) = f(x_t) + \eta \nabla f_{i,t}(x_t)^T (x-x_t) + D_w(x,x_t)$, and lets $x_{t+1} = \argmin f^{(t)}(x)$. This satisfies $\nabla w(x_{t+1}) = \nabla w(x_t) - \eta \nabla f(x_t)$, which is used in our algorithm in Equation (\ref{eq:mirror}).

Mirror Descent also serves as a generalization of classical gradient descent. The case when $w(x)=\frac12 \lVert x\rVert^2$ and $D_w(x,y) = \lVert x-y\rVert^2$ gives $x_{t+1} = \argmin f^{(t)}(x) = x_t -\eta \nabla f(x_t)$.

In our algorithm, each step begins by taking a WPM as described in Equation (\ref{eq:rewrite}). Next, $y_{i,t+1}$ is a component wise WPM of $z_{i,t}$ since $\nabla w(x)$ raises each element to the power of $p$. However, if one were to then take standard gradient steps instead of the specific Mirror Descent step in Equation \ref{eq:mirror}, it would not converge to the optimal value $x^{\ast}$. Instead, we discovered that it was necessary to use Mirror Descent where the map mapping function satisfied $\nabla w(x) = x^p$ with termwise exponentiation.

The usage of the WPM is motivated by the idea that as $p$ increases, the consensus distance decreases.



The \sys{} model is the following
\begin{equation}
\begin{aligned}
&\overline{\mathbf{w}}_i^{(t)} = \left(\sum_{j=1}^m \alpha_{ij}\cdot (\mathbf{w}_j^{(t)})^p \right)^{\frac1p}. \\
&\mathbf{w}_i^{(t+1)} = \left(\left(\overline{\mathbf{w}}_i^{(t)}\right)^p - \eta \nabla f_i (\overline{\mathbf{w}}_i^{(t)})\right)^{\frac1p} \\
&\qquad\quad =\left(\sum_{j=1}^m \alpha_{ij} \left(\mathbf{w}_j^{(t)}\right)^p - \eta \nabla f_i (\overline{\mathbf{w}_i}^{(t)})\right)^{\frac1p}. 
\end{aligned}    
\end{equation}
Firstly, the $\overline{\textbf{w}}_i^{(t)}$ is calculated by taking a WPM of the models at the neighbors of node $i$. Then, $\textbf{w}_{i}^{(t+1)}$ is formed by taking a Mirror Descent step from $\overline{\textbf{w}}_i^{(t)}$. Additionally, in the experiments, a constant $\eta_0$ was chosen and for each value of $p$, $\eta$ was set as $\eta_0^{1+\frac12 p}$.

We rewrite this with $y_{i,t} = \overline{\textbf{w}_i}^{(t)}$ and $z_{i,t} = \textbf{w}_i^{(t+1)}$ for all $i,t$. Then, the process becomes
\begin{equation}
\label{eq:rewrite}
\nabla w(y_{i,t+1}) = \sum_{j=1}^m P(t)_{ij} \nabla w (z_{i,t}).
\end{equation}
\begin{equation}
\label{eq:mirror}
\nabla w(z_{i,t}) = \nabla w(y_{i,t}) - \eta \nabla f_{i,t} (y_{i,t}),
\end{equation}
where $w(\textbf{x}) = \frac1{p+1} \lVert \textbf{x} \rVert_{p+1}^{p+1}$.

\begin{claim}[Lower Bound]
\label{claim:lower}
For $x,y\in \mathbb{R}$,
 \[|x^p-y^p| \geq \frac1{2^{p-1}}|x-y|^p{.}\] 
 and as a corollary for $\textbf{x},\textbf{y}\in \mathbb{R}^n$, $\lVert \nabla w(\textbf{x}) - \nabla w(\textbf{y})\rVert_1 \geq \frac1{2^{p-1}} \lVert \textbf{x}-\textbf{y}\rVert_p^p$.
\end{claim}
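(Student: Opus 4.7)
The plan is to establish the scalar inequality by a case split on the signs of $x$ and $y$ (recalling the paper's convention that $x^p$ denotes $\mathrm{sgn}(x)\,|x|^p$), then obtain the vector statement componentwise.

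First I would handle the same-sign case. By symmetry it suffices to take $x \geq y \geq 0$, where the signed power coincides with the usual power. The claim $x^p - y^p \geq (x-y)^p$ then follows by fixing $y$ and differentiating $g(x) = x^p - (x-y)^p$: since $g'(x) = p\bigl(x^{p-1} - (x-y)^{p-1}\bigr) \geq 0$ for $p \geq 1$ and $x \geq x - y \geq 0$, and $g(y) = y^p \geq 0$, we obtain $x^p - y^p \geq (x-y)^p \geq \frac{1}{2^{p-1}}(x-y)^p$, which is even stronger than needed. The case $x, y \leq 0$ reduces to this by negating both variables and using that $(-t)^p = -t^p$ under the signed-power convention.

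Next I would handle the opposite-sign case, which is where the constant $2^{p-1}$ is tight. Take $x > 0 > y$; then $x^p - y^p = x^p + |y|^p$ and $|x - y| = x + |y|$, so the inequality reduces to
\begin{equation*}
x^p + |y|^p \;\geq\; \frac{(x + |y|)^p}{2^{p-1}}.
\end{equation*}
This is an immediate consequence of the convexity of $t \mapsto t^p$ on $[0,\infty)$ for $p \geq 1$ (equivalently, the power-mean inequality applied to the two nonnegative reals $x$ and $|y|$): $\bigl(\tfrac{x+|y|}{2}\bigr)^p \leq \tfrac{x^p + |y|^p}{2}$. Rearranging yields exactly the bound above, and the reverse sign assignment $y > 0 > x$ is symmetric.

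For the corollary, recall that $\nabla w(\mathbf{v})$ is the componentwise signed power $\mathbf{v}^p$. Thus
\begin{equation*}
\lVert \nabla w(\mathbf{x}) - \nabla w(\mathbf{y}) \rVert_1 \;=\; \sum_{k=1}^n |x_k^p - y_k^p| \;\geq\; \frac{1}{2^{p-1}} \sum_{k=1}^n |x_k - y_k|^p \;=\; \frac{1}{2^{p-1}} \lVert \mathbf{x} - \mathbf{y} \rVert_p^p,
\end{equation*}
where the middle inequality is the scalar bound applied in each coordinate. The only delicate step is the opposite-sign case, since the same-sign and mixed-sign analyses require different tools (monotonicity versus convexity) and only the mixed case saturates the $2^{p-1}$ factor; keeping track of the sign convention $x^p = \mathrm{sgn}(x)|x|^p$ throughout is the main place where care is needed.
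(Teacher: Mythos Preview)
Your argument is correct, but it proceeds along a genuinely different line from the paper's. The paper makes the single substitution $a=x-y$, $b=x+y$, reducing the claim to $(a+b)^p+(a-b)^p\ge 2a^p$ for $a>0$, and then observes via the binomial theorem that only terms $a^{p-2k}b^{2k}$ survive on the left, all of which are nonnegative. Your approach instead splits on the relative signs of $x$ and $y$: in the same-sign case you obtain the stronger bound $|x^p-y^p|\ge |x-y|^p$ by a monotonicity/derivative argument, and in the opposite-sign case you reduce to convexity of $t\mapsto t^p$ on $[0,\infty)$. The paper's route is more compact and avoids any case distinction, but its binomial expansion step tacitly uses that the signed power coincides with the ordinary power, i.e.\ that $p$ is odd; your argument works uniformly for every real $p\ge 1$ under the signed-power convention and, as you note, makes transparent that the constant $2^{p-1}$ is saturated precisely in the opposite-sign case. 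The componentwise passage to the vector corollary is the same in both.
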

The proof is in the Appendix.

\begin{claim}
\label{claim:stronk}
There exists a $\sigma_p\geq \frac{1}{2^{p-1}}$ for every positive integer $p$ such that if $w(x)= \frac1{p+1} \lVert x\rVert^{p+1}$, then
\[D_w(a,b) \geq \frac{\sigma_p}{p+1} \lVert a-b\rVert_{p+1}^{p+1}{.}\]
\end{claim}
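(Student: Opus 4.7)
The plan is to exploit the separability of $w(\mathbf{x}) = \frac{1}{p+1}\lVert \mathbf{x}\rVert_{p+1}^{p+1} = \sum_k \frac{1}{p+1}|x_k|^{p+1}$, which makes both the Bregman divergence and the $\ell_{p+1}$ norm coordinate-wise sums. Thus $D_w(\mathbf{a},\mathbf{b}) = \sum_k D_w(a_k,b_k)$ and $\lVert \mathbf{a}-\mathbf{b}\rVert_{p+1}^{p+1} = \sum_k |a_k - b_k|^{p+1}$, so it suffices to prove the scalar inequality $D_w(a,b) \geq \frac{\sigma_p}{p+1}|a-b|^{p+1}$ for $a,b\in\mathbb{R}$, where I use the notation $x^p := \mathrm{sgn}(x)|x|^p$ consistent with the paper, so that $w'(x) = x^p$.

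Next, I would rewrite $D_w$ in an integral form that is amenable to the previous Lower Bound claim. Using the identity
\begin{equation}
D_w(a,b) \;=\; \int_0^1 \bigl[(b + s(a-b))^p - b^p\bigr]\,(a-b)\,ds,
\end{equation}
one checks that the integrand is nonnegative: since the signed power $x \mapsto x^p$ is nondecreasing in $x$, the bracket and $(a-b)$ always share the same sign. Hence I can replace the integrand by its absolute value and apply Claim \ref{claim:lower} with $x = b+s(a-b)$ and $y = b$ to get $|(b+s(a-b))^p - b^p| \geq \frac{1}{2^{p-1}} s^p |a-b|^p$.

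Substituting this lower bound and carrying out the remaining $s$-integration gives
\begin{equation}
D_w(a,b) \;\geq\; \frac{|a-b|^{p+1}}{2^{p-1}} \int_0^1 s^p\,ds \;=\; \frac{1}{2^{p-1}(p+1)}\,|a-b|^{p+1},
\end{equation}
so the scalar inequality holds with $\sigma_p = \frac{1}{2^{p-1}}$, which matches the bound asserted in the claim. Summing over coordinates then yields the vector statement.

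The main obstacle I anticipate is a clean justification of the sign/monotonicity step when $a$ and $b$ straddle zero. The signed-power function is still monotone across the origin, so the bracket $(b+s(a-b))^p - b^p$ does have the sign of $(a-b)$ for every $s\in[0,1]$; I would spell this out explicitly with the two-case argument (same-sign versus opposite-sign) that already underlies the proof of Claim \ref{claim:lower}, so the scalar reduction, the integral representation, and the sign check fit together without extra constants being lost.
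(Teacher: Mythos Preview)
Your proposal is correct and is essentially the same argument as the paper's: both reduce to the scalar case, note that $D_w(a,b)$ and $\frac{\sigma_p}{p+1}|a-b|^{p+1}$ vanish at $a=b$, and then integrate the Claim~\ref{claim:lower} inequality $|a^p-b^p|\ge 2^{1-p}|a-b|^p$ from $b$ to $a$ (the paper phrases this as comparing $a$-derivatives; your substitution $t=b+s(a-b)$ just reparametrizes the same integral). Your exposition is a little more explicit about separability and the sign check, but the route is identical.
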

The proof is in the Appendix.


To begin, in Lemma 1, we demonstrate that under WPM aggregation, the values at all nodes are kept close together.
\begin{lemma}
\label{lemma:consensus}
For all $i,t$,
\begin{equation}
   \lVert y_{i,t} - \overline{y_t}\rVert \leq \sqrt[p]{2^{p-1} \left(\vartheta \kappa^{t} \sum_{k=1}^m \lVert \nabla w(y_{k})\rVert +  \frac{m\eta G_l}{1-\kappa}\right)}. 
\end{equation}
where $\vartheta = \left(1-\frac{\zeta}{4m^2} \right)^{-2}$ and $\kappa = \left(1-\frac{\zeta}{4m^2} \right)^{\frac1B}$.
\end{lemma}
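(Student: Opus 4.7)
The plan is to linearise the recursion by passing to the dual variables $u_{i,t} := \nabla w(y_{i,t})$, i.e.\ the element-wise $p$-th power of $y_{i,t}$. Combining (\ref{eq:rewrite}) and (\ref{eq:mirror}) gives the perturbed linear consensus update
\begin{equation*}
u_{i,t+1} = \sum_{j=1}^m P(t)_{ij}\bigl(u_{j,t} - \eta \nabla f_{j,t}(y_{j,t})\bigr),
\end{equation*}
and since $P(t)$ is doubly stochastic by Assumption \ref{assumption:network}, the average satisfies $\overline{u_{t+1}} = \overline{u_t} - \tfrac{\eta}{m}\sum_{j}\nabla f_{j,t}(y_{j,t})$, with per-step perturbation bounded by $\eta G_l$ by the $G_l$-Lipschitz assumption. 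So the problem reduces to a standard consensus-with-disturbance analysis in the dual space, followed by a transfer back to the primal via Claim \ref{claim:lower}.

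First I would invoke the classical product-of-stochastic-matrices estimate of Nedi\'c--Ozdaglar available under Assumption \ref{assumption:network}: denoting by $\Phi(t,s) = P(t-1)\cdots P(s)$ the left product of transition matrices, the $B$-joint connectivity together with the diagonal lower bound $P(t)_{ii}\geq \zeta$ yields an entrywise contraction
\begin{equation*}
\Bigl|\Phi(t,s)_{ij} - \tfrac{1}{m}\Bigr| \leq \vartheta\, \kappa^{\,t-s},
\end{equation*}
with exactly the constants $\vartheta = (1-\zeta/(4m^2))^{-2}$ and $\kappa = (1-\zeta/(4m^2))^{1/B}$ appearing in the lemma.

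Next I would unroll the dual recursion to
\begin{equation*}
u_{i,t} - \overline{u_t} \;=\; \sum_{k} \bigl[\Phi(t,0)_{ik} - \tfrac{1}{m}\bigr]\, u_{k,0} \;-\; \eta \sum_{s=0}^{t-1}\sum_{k}\bigl[\Phi(t,s+1)_{ik}-\tfrac{1}{m}\bigr]\,\nabla f_{k,s}(y_{k,s}),
\end{equation*}
apply the contraction estimate term by term, use $\lVert \nabla f_{k,s}\rVert \leq G_l$, and sum the geometric series $\sum_{s=0}^{t-1} \kappa^{\,t-s-1}\leq 1/(1-\kappa)$. This delivers exactly
\begin{equation*}
\lVert u_{i,t} - \overline{u_t}\rVert \;\leq\; \vartheta\,\kappa^{t} \sum_{k=1}^m \lVert \nabla w(y_{k})\rVert \;+\; \frac{m\eta G_l}{1-\kappa},
\end{equation*}
matching the expression under the $p$-th root in the lemma.

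Finally, I would push the bound back to the primal via Claim \ref{claim:lower} in the form $\lVert y_{i,t} - \overline{y_t}\rVert^p \leq 2^{p-1}\lVert \nabla w(y_{i,t}) - \nabla w(\overline{y_t})\rVert$, interpreting $\overline{y_t}$ as the primal pre-image of $\overline{u_t}$ so that $\nabla w(\overline{y_t}) = \overline{u_t}$, and then taking $p$-th roots. The main obstacle is the contraction step: making the constants $\vartheta$ and $\kappa$ emerge exactly under $B$-joint connectivity (rather than single-step connectivity) requires a careful matrix-product argument that exploits both the diagonal lower bound $P(t)_{ii}\geq \zeta$ and doubly-stochastic symmetry, and this is where essentially all of the technical work lives; the dual-to-primal conversion and the geometric summation are then essentially mechanical.
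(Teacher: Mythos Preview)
Your proposal is correct and follows essentially the same route as the paper: pass to the dual variables $\nabla w(y_{i,t})$, unroll the perturbed linear consensus recursion, apply the Nedi\'c--Ozdaglar mixing estimate $|P(t,\tau)_{ij}-1/m|\leq \vartheta\kappa^{t-\tau}$, sum the geometric series, and convert back to the primal via Claim~\ref{claim:lower}. The one point on which you can relax is your identified ``main obstacle'': the paper does not re-derive the constants $\vartheta,\kappa$ either, but simply invokes Corollary~1 of \cite{nedic2008distributed} under Assumption~\ref{assumption:network}, so that step is handled by citation rather than by any new matrix-product argument.
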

\begin{proof}
Under Assumption \ref{assumption:network}, Corollary 1 in \cite{nedic2008distributed} states that
\begin{equation}
\label{eq:expmix}
 |P(t,\tau)_{ij} - \frac1m | \leq \vartheta \kappa^{t-\tau}.
 \end{equation}

Note that we may write a general formula for $\nabla w(y_{i,t+1})$. Define the matrix $P(t,s) = P(t)P(t-1)\cdots P(s+1)P(s)$. Then,
\begin{equation}
\begin{aligned}
 &\nabla w(y_{i,t+1}) = \sum_{k=1}^m P(t,s)_{ik} \nabla w(y_{k,s})\\  &\qquad\qquad -\sum_{r=s}^t \sum_{k=1}^m P(t,r)_{ik} \eta \nabla f_{k,r}(y_{k,r}). 
\end{aligned}
\end{equation}

Note that $\nabla w(y_{i,t}) - \nabla w(\overline{y_r})$. Then, by applying the Triangle Inequality and Equation (\ref{eq:expmix}),
\begin{equation}
\begin{aligned}
    &\lVert \nabla w(y_{i,t+1}) - \nabla w(\overline{y_{t+1}})\rVert\\
    &\leq \sum_{k=1}^m \kappa^{t-s}\lVert \nabla w(y_{k,s})\rVert +\sum_{r=s}^t \sum_{k=1}^m \kappa^{t-r} \lVert \eta \nabla f_{k,r}(y_{k,r})\rVert\\
    &\leq \sum_{k=1}^m \kappa^{t-s}\lVert \nabla w(y_{k,s})\rVert + \sum_{r=s}^t \sum_{k=1}^m \vartheta \kappa^{t-r} \lVert \eta \nabla f_{k,r}(y_{k,r})\rVert\\
    &\leq \vartheta \kappa^{t-s} \sum_{k=1}^m \lVert \nabla w(y_{k,s})\rVert + m\eta G_l \sum_{r=s}^t \kappa^{t-r}\\
    &\leq \vartheta \kappa^{t-s} \sum_{k=1}^m \lVert \nabla w(y_{k,s})\rVert + m\eta G_l \cdot \frac{1}{1-\kappa}.
\end{aligned}
\end{equation}

To finish, we use Claim \ref{claim:lower} and plug $s=0$ in to get the stated bound.
\begin{equation}
\begin{aligned}
  &\lVert y_{i,t} - \overline{y_t}\rVert_p^p \leq \sqrt[p]{2^{p-1} \lVert \nabla w(y_{i,t}) - \nabla w(\overline{y_t})\rVert_1} \\
  &\qquad \leq \sqrt[p]{2^{p-1} \left(\vartheta \kappa^{t-s} \sum_{k=1}^m \lVert \nabla w(y_{k,s})\rVert_1 + m\eta G_l \cdot \frac{1}{1-\kappa}\right)}.    
\end{aligned}
\end{equation}
\end{proof}
\begin{theorem}
\label{theorem}
For all nodes $i$ and finishing time $T$, the regret satisfies
\begin{equation}
 \begin{aligned}
    &\overline{\textbf{Reg}_i}(T):= \frac1T \sum_{t=1}^T \sum_{j=1}^m f(\overline{y_{i,t}}) - f_{i,t}(x^{\ast}) \\
    & \leq \frac{m}{T} \frac1{\eta} D_w(x^{\ast},\overline{y_0}) +  m  2G_l \frac{p}{p+1}\sqrt[p]{\frac{\eta G_l}{\sigma_p}}\\
    &+2 G_l m \sqrt[p]{2^{p-1} (\vartheta \kappa^{t-s} \sum_{k=1}^m \lVert \nabla w(y_{k,s})\rVert_1 + m\eta G_l \cdot \frac{1}{1-\kappa})}.
\end{aligned}   
\end{equation}

\end{theorem}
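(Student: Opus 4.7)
The plan is to follow the standard distributed (dual-averaging) mirror descent regret analysis, adapting each step to the WPM aggregation and exploiting Claim 1, Claim 2, and Lemma 1 at the appropriate places. The bound has three pieces (initial divergence, per-step mirror-descent overhead, consensus error), and the proof produces them in that order.

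\textbf{Step 1 (Per-iteration mirror-descent inequality).} I would start from the update (\ref{eq:mirror}) and apply the three-point identity for Bregman divergences. Writing $\eta \nabla f_{i,t}(y_{i,t}) = \nabla w(y_{i,t}) - \nabla w(z_{i,t})$ and expanding $D_w$ gives, for every $i,t$,
\begin{equation*}
\eta\,\langle \nabla f_{i,t}(y_{i,t}),\, y_{i,t}-x^{\ast}\rangle
= D_w(x^{\ast},y_{i,t}) - D_w(x^{\ast},z_{i,t}) - D_w(z_{i,t},y_{i,t}) + \eta\,\langle \nabla f_{i,t}(y_{i,t}),\, y_{i,t}-z_{i,t}\rangle .
\end{equation*}
Convexity of $f_{i,t}$ then upper-bounds the left side by $\eta (f_{i,t}(y_{i,t}) - f_{i,t}(x^{\ast}))$.

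\textbf{Step 2 (Kill the cross term with Young and Claim 2).} Using H\"older between the conjugate pair $p+1$ and $\tfrac{p+1}{p}$, followed by Young's inequality against the strong-convexity-like bound $D_w(z_{i,t},y_{i,t}) \geq \tfrac{\sigma_p}{p+1}\|z_{i,t}-y_{i,t}\|_{p+1}^{p+1}$ from Claim 2, together with $\|\nabla f_{i,t}\|\leq G_l$, I would obtain
\begin{equation*}
\eta\,\langle \nabla f_{i,t}(y_{i,t}),\, y_{i,t}-z_{i,t}\rangle - D_w(z_{i,t},y_{i,t})
\;\leq\; \frac{p}{p+1}\,\sigma_p^{-1/p}\,(\eta G_l)^{(p+1)/p}.
\end{equation*}
After dividing by $\eta$ this is precisely the second term of the theorem (up to the factor of $2$ absorbed in Step~4).

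\textbf{Step 3 (Telescoping across the WPM aggregation).} This is where the non-linearity has to be handled carefully. Using $D_w(x,y) = w(x) + w^{\ast}(\nabla w(y)) - \langle \nabla w(y), x\rangle$ and the aggregation $\nabla w(y_{i,t+1}) = \sum_j P(t)_{ij}\,\nabla w(z_{j,t})$, Jensen's inequality applied to the convex conjugate $w^{\ast}$ gives
\begin{equation*}
D_w(x^{\ast},y_{i,t+1}) \;\leq\; \sum_{j=1}^m P(t)_{ij}\, D_w(x^{\ast},z_{j,t}),
\end{equation*}
and summing over $i$ together with the doubly stochastic property of $P(t)$ from Assumption 1 yields $\sum_i D_w(x^{\ast}, y_{i,t+1}) \leq \sum_i D_w(x^{\ast}, z_{i,t})$. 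Summing Steps 1--2 over $i$ and $t=1,\dots,T$ then telescopes the divergence terms down to $\sum_i D_w(x^{\ast}, y_{i,1})$, bounded by $m\,D_w(x^{\ast},\overline{y_0})$ after identifying $\overline{y_t}$ with the dual-space average.

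\textbf{Step 4 (From $y_{i,t}$ to $\overline{y_t}$ via Lemma 1).} Finally, to shift the argument from the local iterate to the aggregate used in the regret definition, I would write
\begin{equation*}
f_{i,t}(\overline{y_t}) - f_{i,t}(x^{\ast}) \;\leq\; G_l\,\|\overline{y_t}-y_{i,t}\| + \bigl(f_{i,t}(y_{i,t}) - f_{i,t}(x^{\ast})\bigr)
\end{equation*}
by Lipschitz continuity, and similarly pay one more $G_l\,\|\overline{y_t}-y_{i,t}\|$ to swap $y_{i,t}$ with $\overline{y_t}$ on the left (which accounts for the factor $2G_l$ in the theorem). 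Substituting the bound from Lemma 1 yields the third term. Dividing the aggregated inequality by $T$ and choosing the stated learning rate produces the three summands of the theorem exactly.

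\textbf{Main obstacle.} The truly non-routine step is Step 3: for the classical $p=1$ (arithmetic-mean) case the telescoping is immediate, but for general $p$ the averaging happens in the dual variables $\nabla w(\cdot)$, so one must pass through the conjugate $w^{\ast}$ and invoke Jensen there. Getting the correct notion of $\overline{y_t}$ (namely the one satisfying $\nabla w(\overline{y_t}) = \tfrac{1}{m}\sum_j \nabla w(y_{j,t})$, consistent with how Lemma 1 was stated) and verifying that this is precisely what makes Claim 2 and Lemma 1 plug in cleanly is the delicate book-keeping the proof hinges on; everything else is fairly mechanical convex-analysis.
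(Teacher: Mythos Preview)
Your proposal is correct and reaches the theorem, but it follows a genuinely different route from the paper's own proof.

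The paper works entirely with the \emph{dual-averaged} iterate $\overline{y_t}$, defined by $\nabla w(\overline{y_t})=\tfrac1m\sum_j\nabla w(y_{j,t})$. Using double stochasticity once, it observes that $\overline{y_t}$ itself obeys a single mirror-descent recursion
\[
\nabla w(\overline{y_{t+1}})=\nabla w(\overline{y_t})-\frac{\eta}{m}\sum_{i}\nabla f_{i,t}(y_{i,t}),
\]
and then runs the three-point identity and the Claim~\ref{claim:stronk}/maximization-of-$ax-bx^{p+1}$ argument \emph{once} on this aggregated sequence. The consensus error enters because the gradients are evaluated at $y_{i,t}$ rather than at $\overline{y_t}$: the term $\tfrac1m\sum_i\nabla f_{i,t}(y_{i,t})^{T}(y_{i,t}-\overline{y_t})$ appears directly inside the main convexity inequality, and a second copy of $G_l\lVert y_{i,t}-\overline{y_t}\rVert$ comes from the Lipschitz shift---hence the $2G_l$ in the statement.

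Your route instead applies the three-point identity at \emph{each} node and handles the non-linear aggregation by invoking Jensen on the convex conjugate $w^{\ast}$ to obtain $\sum_i D_w(x^{\ast},y_{i,t+1})\le\sum_i D_w(x^{\ast},z_{i,t})$; this is precisely the Duchi--Agarwal--Wainwright distributed-mirror-descent mechanism, and it telescopes cleanly after summing over $i$. The trade-off: the paper's argument is shorter (one mirror-descent inequality instead of $m$) and never touches $w^{\ast}$ explicitly, while your argument is more modular and makes transparent why double stochasticity is exactly the hypothesis needed at the aggregation step. A minor bonus of your decomposition is that the consensus term naturally arises with a single $G_l$ factor; the extra ``swap'' you add in Step~4 to reproduce the theorem's $2G_l$ is not actually forced by your argument (it is forced in the paper's, because their convexity step already produces one $\nabla f_{i,t}(y_{i,t})^{T}(y_{i,t}-\overline{y_t})$ before any Lipschitz shift).
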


\begin{proof}
Define $\nabla w(\overline{y_{t}}) = \frac1m \sum_{i=1}^m \nabla w(y_{i,t})$. 
Thanks to Lemma \ref{lemma:consensus}, it is guaranteed that $\overline{y_{t}}$ is close to $y_{i,t}$. This allows us to show the convergence to optimality of the comparatively simpler $\overline{y_t}$.
\begin{equation}
  \nabla w(\overline{y_{t+1}})=\nabla w(\overline{y_{t}}) - \eta \frac1m \sum_{i=1}^m \nabla f_{i,t}(y_{i,t}).  
\end{equation}
Then, by the convexity of $f_{i,t}$ and the definition of $\overline{y_t}$
\begin{equation}
\label{eq:mainlinebound}
 \begin{aligned}
    & \frac1m \sum_{i=1}^m f_{i,t}(y_{i,t}) - f_{i,t}(x) \leq \frac1m \sum_{i=1}^m \nabla f_{i,t}(y_{i,t})^T (y_{i,t}-x)\\
    &= \frac1m \sum_{i=1}^m \nabla f_{i,t}(y_{i,t})^T ((\overline{y_t}-x)+(y_{i,t}-\overline{y_t}))\\
    &= \frac1{\eta} (\nabla w(\overline{y_t}) - \nabla w(\overline{y_{t+1}}))^T (\overline{y_t}-x) \\
    & \quad + \frac1m \sum_{i=1}^m \nabla f_{i,t}(y_{i,t})^T (y_{i,t}-\overline{y_t})\\
    &\leq \frac1{\eta} (D_w(x,\overline{y_t}) + D_w(\overline{y_t},\overline{y_{t+1}}) - D_w(x,\overline{y_{t+1}}) \\
    & \quad + \frac1m G_l \sum_{i=1}^m \lVert y_{i,t} - \overline{y_t}\rVert.
\end{aligned}   
\end{equation}

Additionally, note that by Lipschitz, 
\begin{equation}
\label{eq:lipswitch}
\begin{aligned}
     &\frac1m \sum_{i=1}^m f_{i,t}(\overline{y_t}) - f_{i,t}(x)\\
     &\leq \frac1m \sum_{i=1}^m (f_{i,t}(y_{i,t}) - f_{i,t}(x) + G_l\lVert y_{i,t} - \overline{y_t}\rVert).  
\end{aligned}
\end{equation}
Thus, we are left to bound $D_w(\overline{y_t},\overline{y_{t+1}})$.
\begin{equation}
\label{eq:Dwytbound}
 \begin{aligned}
    &D_w(\overline{y_t}, \overline{y_{t+1}})= w(\overline{y_t})-w(\overline{y_{t+1}}) - \nabla w(\overline{y_{t+1}})^T (\overline{y_t}-\overline{y_{t+1}})\\
    &\leq \nabla w(\overline{y_t})^T(\overline{y_t}-\overline{y_{t+1}}) - \frac{\sigma_p}{p+1}\lVert \overline{y_t}-\overline{y_{t+1}}\rVert^{p+1} \\
    &\quad - \nabla w(\overline{y_{t+1}})^T (\overline{y_t}-\overline{y_{t+1}})\\
    &= (\nabla w(\overline{y_t})-\nabla w(\overline{y_{t+1}}))^T(\overline{y_t}-\overline{y_{t+1}}) \\
    &\quad - \frac{\sigma_p}{p+1}\lVert \overline{y_t}-\overline{y_{t+1}}\rVert^{p+1}\\
    &= (\eta \sum_{i=1}^m \frac1m \nabla f_{i,t}(y_{i,t}))^T (\overline{y_t}-\overline{y_{t+1}}) - \frac{\sigma_p}{p+1} \lVert \overline{y_t} - \overline{y_{t+1}}\rVert^{p+1}\\
    &\leq \eta G_l \lVert \overline{y_t} - \overline{y_{t+1}}\rVert - \frac{\sigma_p}{p+1}\lVert \overline{y_t} - \overline{y_{t+1}}\rVert^{p+1}\\
    &\leq \eta G_l \frac{p}{p+1} \sqrt[p]{\frac{\eta G_l}{\sigma_p}}.
\end{aligned}   
\end{equation}
Where the first inequality comes from Claim \ref{claim:stronk}, and the last inequality follows from $ax-bx^{p+1} \leq a \frac{p}{p+1} \sqrt[p]{\frac{a}{(p+1)b}}$ for $a,b\geq 0$. Combining the bounds in Equations (\ref{eq:mainlinebound}), (\ref{eq:lipswitch}), and (\ref{eq:Dwytbound}) gives
\begin{equation}
 \begin{aligned}
    &\sum_{t=0}^{T-1} \frac1m \sum_{i=1}^m f_{i,t}(\overline{y_t}) - f_{i,t}(x)\\
    &\leq \sum_{t=0}^{T-1} \frac1\eta (D_w(x,\overline{y_t}) + D_w(\overline{y_t}, \overline{y_{t+1}}) - D_w(x,\overline{y_{t+1}}))\\
    &+ \frac2m G_l \sum_{t=0}^{T-1} \sum_{i=1}^m \lVert y_{i,t} - \overline{y_t}\rVert\\
    &\leq \frac1{\eta} (D_w(x,\overline{y_0})-D_w(x,\overline{y_{T}}))+ \frac1{\eta} T \eta 2G_l \frac{p}{p+1}\sqrt[p]{\frac{\eta G_l}{\sigma_p}}\\
    &+2 G_l T \sqrt[p]{2^{p-1} (\vartheta \kappa^{t-s} \sum_{k=1}^m \lVert \nabla w(y_{k,s})\rVert_1 + m\eta G_l \cdot \frac{1}{1-\kappa})}.
\end{aligned}   
\end{equation}

Which easily manipulates to the final theorem result.
\end{proof}

\begin{figure*}[hb]
\setlength{\abovecaptionskip}{-1cm}
\setlength{\belowcaptionskip}{-1cm}
\centering
\subfigure[Accuracy with Different P]
{
    \label{figure:acc_diff_p_iid}
    \begin{minipage}[b]{\picRatio\linewidth}
        \centering
        \includegraphics[scale=\accLossScale]{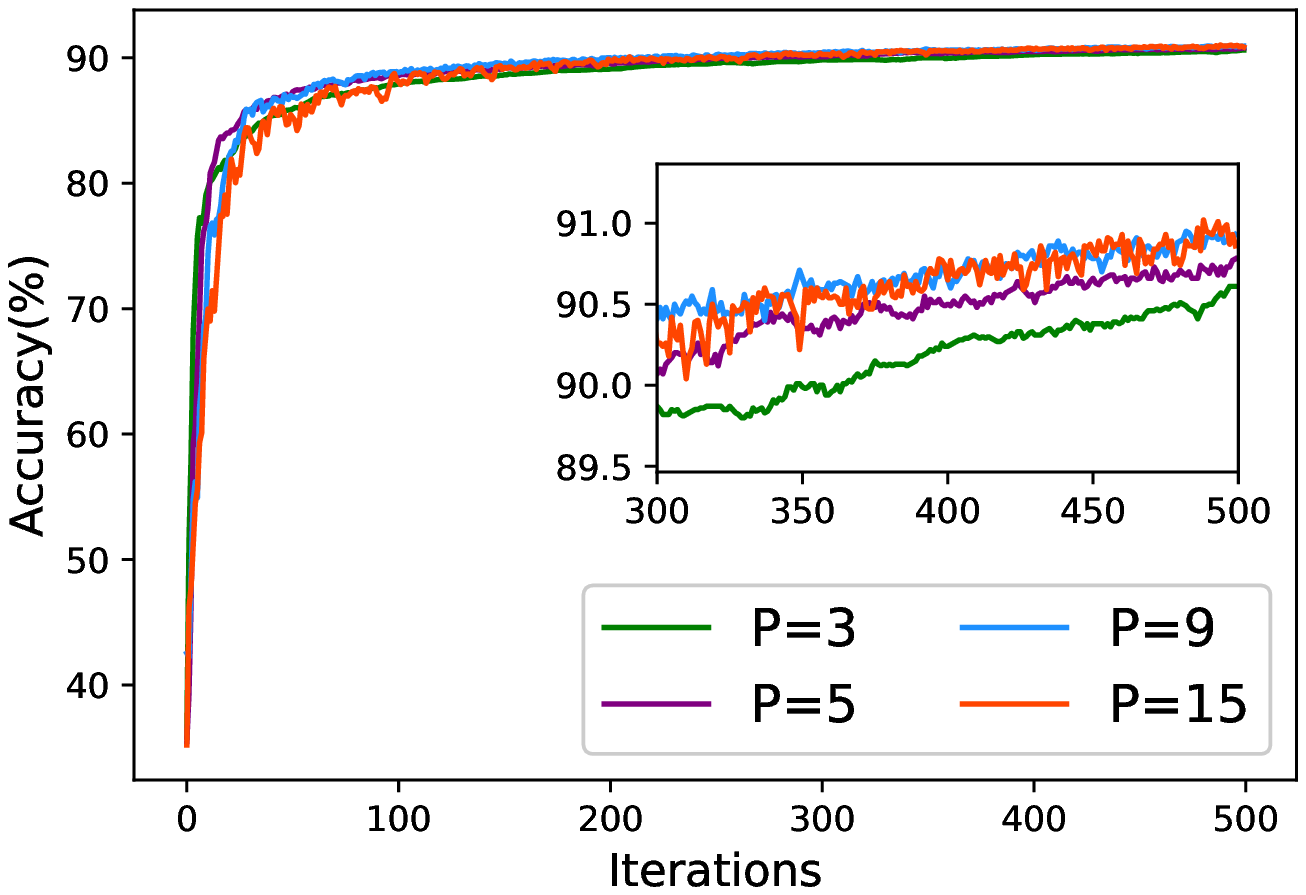} 
    \end{minipage}
}
\subfigure[Accuracy with 10 Devices]
{
    \setlength{\abovecaptionskip}{-1cm}
    \setlength{\belowcaptionskip}{-1cm}
    \label{figure:acc_size_10_iid}
    \begin{minipage}[b]{\picRatio\linewidth}
        \centering
        \includegraphics[scale=\accLossScale]{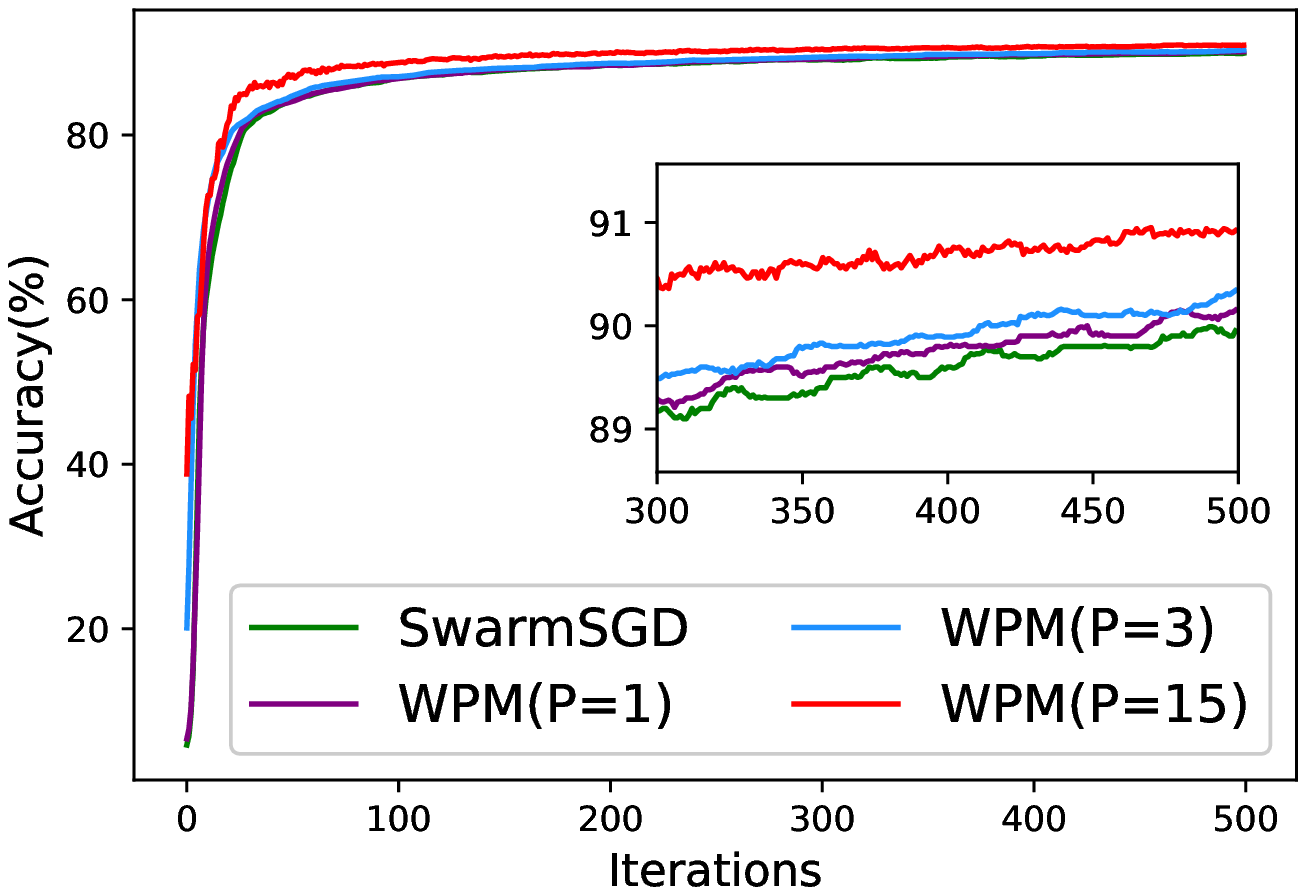} 
    \end{minipage}
}
\subfigure[Accuracy with 20 Devices]
{
    \setlength{\abovecaptionskip}{-1cm}
    \setlength{\belowcaptionskip}{-1cm}
    \label{figure:acc_size_20_iid}
    \begin{minipage}[b]{\picRatio\linewidth}
        \centering
        \includegraphics[scale=\accLossScale]{acc_LR_MNIST_iid.eps}
    \end{minipage}
}
\subfigure[Accuracy with 100 Devices]
{
    \setlength{\abovecaptionskip}{-1cm}
    \setlength{\belowcaptionskip}{-1cm}
    \label{figure:acc_size_100_iid}
    \begin{minipage}[b]{\picRatio\linewidth}
        \centering
        \includegraphics[scale=\accLossScale]{acc_LR_MNIST_iid.eps}
    \end{minipage}
}

\caption{
Accuracy Comparison on i.i.d Setting with Devices $\in \{10, 15, 20\}$ }
\label{fig:accuracy}
\end{figure*}

\begin{figure*}[hb]
\setlength{\abovecaptionskip}{-1cm}
\setlength{\belowcaptionskip}{-1cm}
\centering
\subfigure[Accuracy with Different P]
{
    \label{figure:acc_diff_p_noniid}
    \begin{minipage}[b]{\picRatio\linewidth}
        \centering
        \includegraphics[scale=\accLossScale]{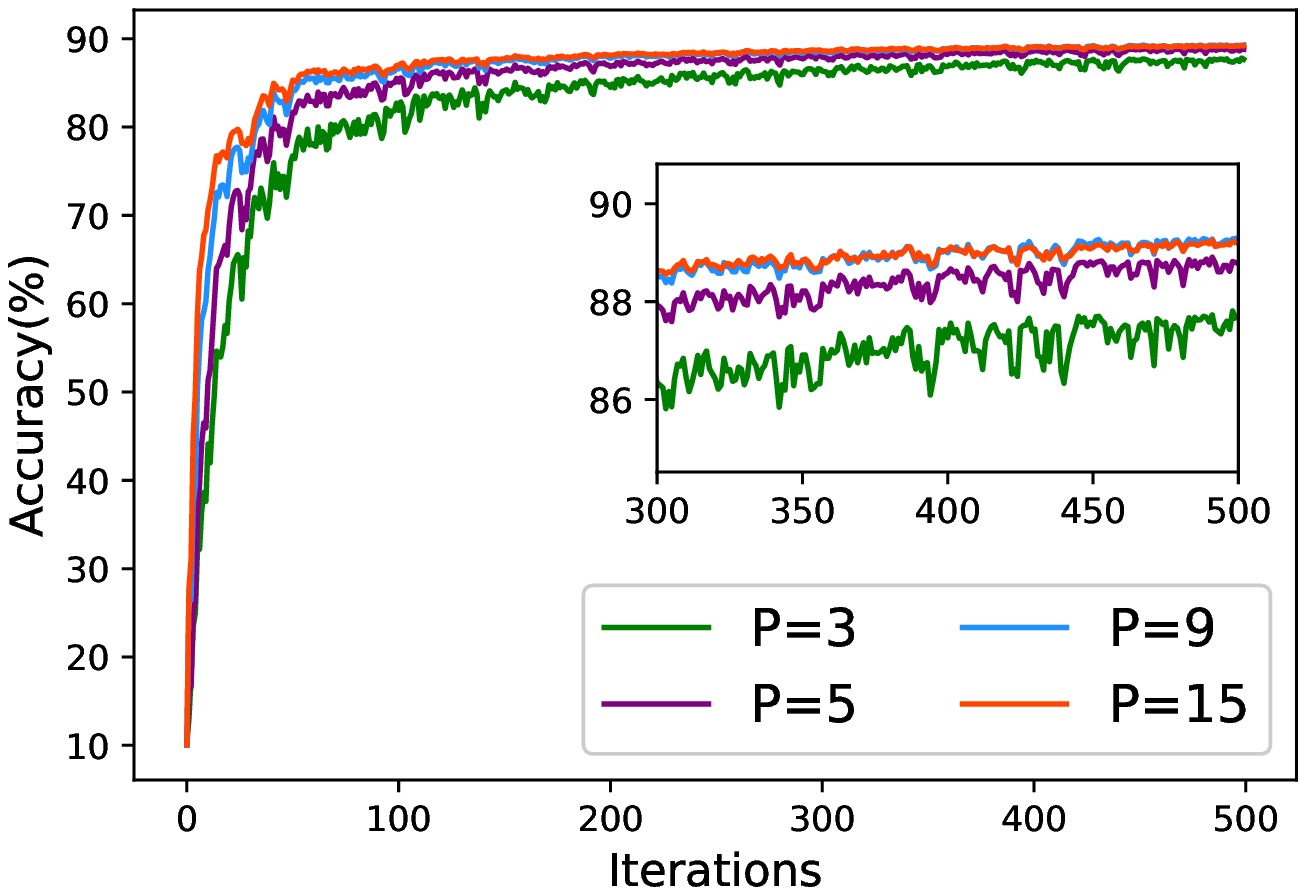} 
    \end{minipage}
}
\subfigure[Accuracy with 10 Devices]
{
    \label{figure:acc_size_10_noniid}
    \begin{minipage}[b]{\picRatio\linewidth}
        \centering
        \includegraphics[scale=\accLossScale]{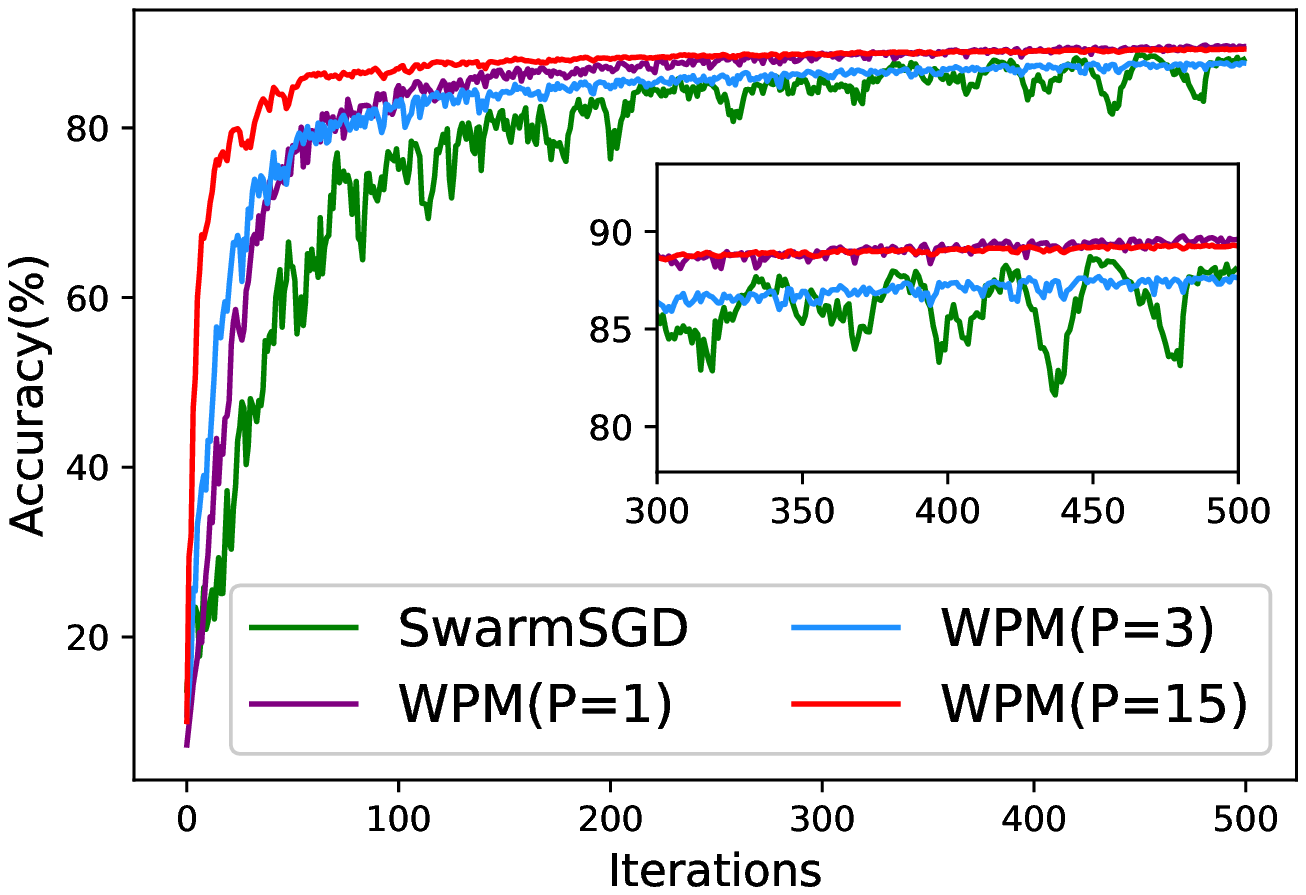} 
    \end{minipage}
}
\subfigure[Accuracy with 20 Devices]
{
    \label{figure:acc_size_20_noniid}
    \begin{minipage}[b]{\picRatio\linewidth}
        \centering
        \includegraphics[scale=\accLossScale]{acc_LR_MNIST_non-iid.eps}
    \end{minipage}
}
\subfigure[Accuracy with 100 Devices]
{
    \label{figure:acc_size_100_noniid}
    \begin{minipage}[b]{\picRatio\linewidth}
        \centering
        \includegraphics[scale=\accLossScale]{acc_LR_MNIST_non-iid.eps}
    \end{minipage}
}
\caption{
Accuracy Comparison on Non-i.i.d Setting with Devices $\in \{10, 15, 20\}$}
\label{fig:my_label}
\end{figure*}

\begin{figure*}[h]
\setlength{\abovecaptionskip}{-1cm}
\setlength{\belowcaptionskip}{-1cm}
\centering
\subfigure[Loss with Different P]
{
    \label{figure:loss_diff_p_iid}
    \begin{minipage}[b]{\picRatio\linewidth}
        \centering
        \includegraphics[scale=\accLossScale]{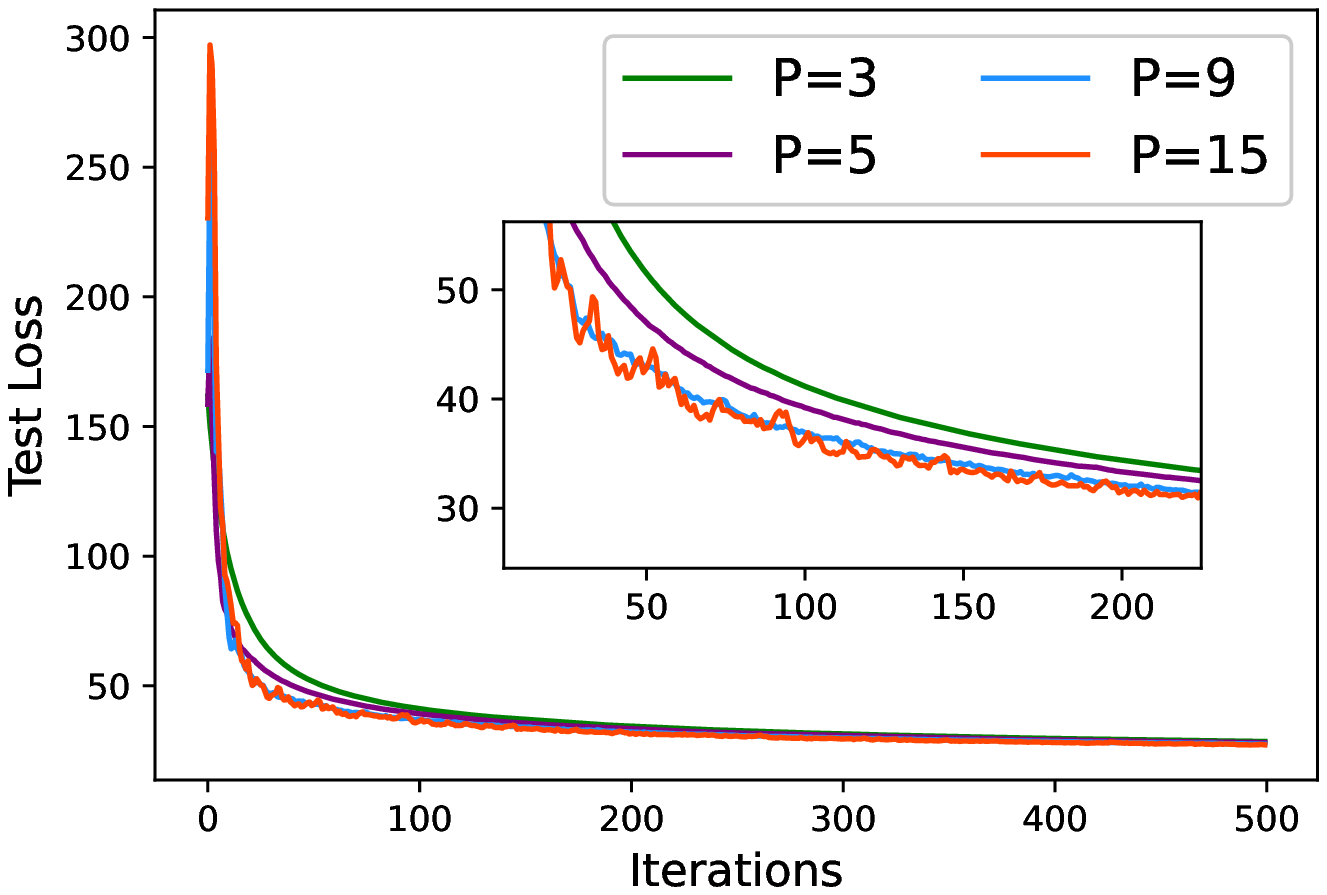} 
    \end{minipage}
}
\subfigure[Loss with 10 Devices]
{
    \label{figure:loss_size_10_iid}
    \begin{minipage}[b]{\picRatio\linewidth}
        \centering
        \includegraphics[scale=\accLossScale]{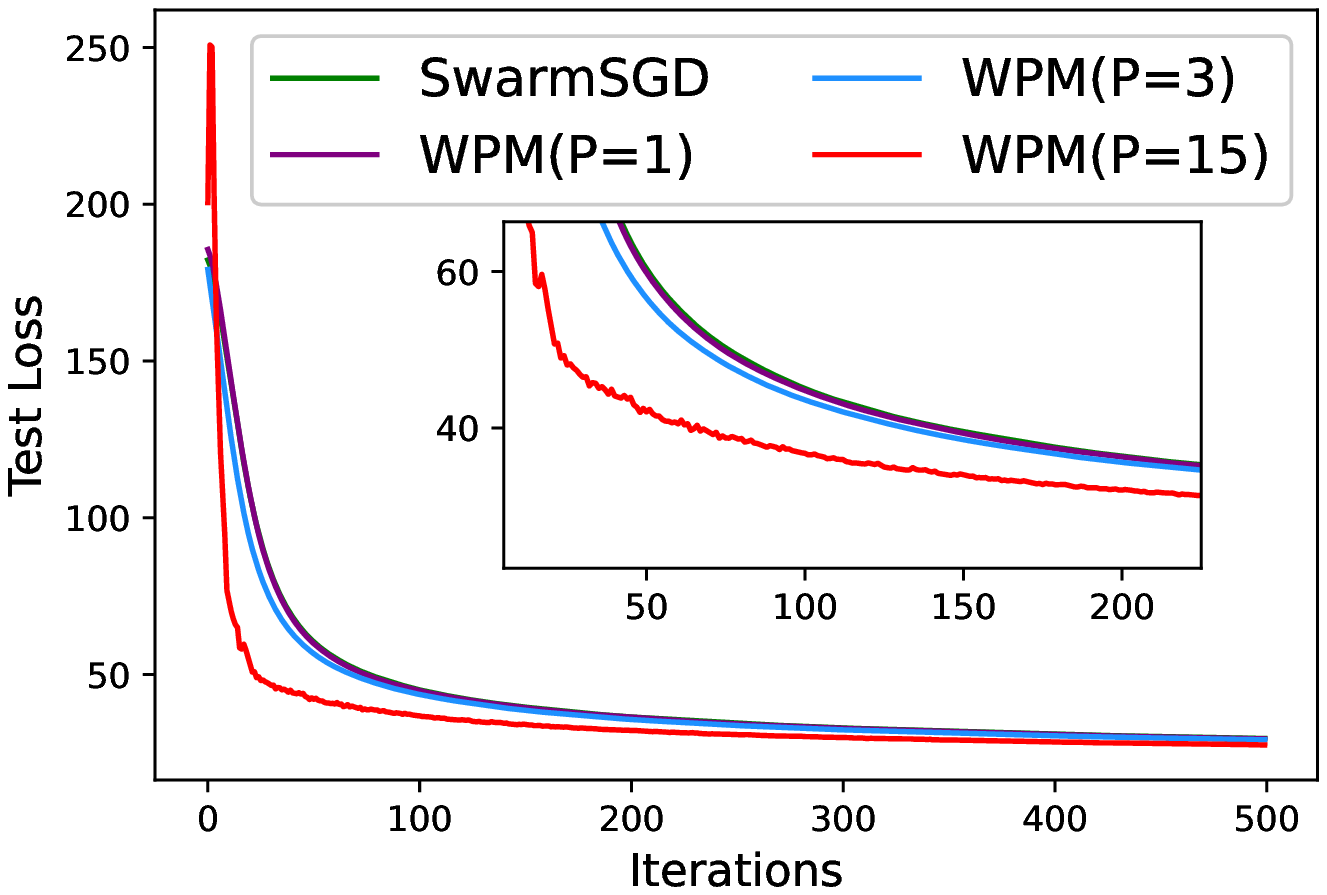}
    \end{minipage}
}
\subfigure[Loss with 20 Devices]
{
    \label{figure:loss_size_20_iid}
    \begin{minipage}[b]{\picRatio\linewidth}
        \centering
        \includegraphics[scale=\accLossScale]{loss_LR_MNIST_iid.eps}
    \end{minipage}
}
\subfigure[Loss with 100 Devices]
{
    \label{figure:loss_size_100_iid}
    \begin{minipage}[b]{\picRatio\linewidth}
        \centering
        \includegraphics[scale=\accLossScale]{loss_LR_MNIST_iid.eps}
    \end{minipage}
}
\caption{
Loss Comparison on i.i.d Setting with Devices $\in \{10, 15, 20\}$}
\label{fig:loss}
\end{figure*}

\begin{figure*}[h]
\setlength{\abovecaptionskip}{-1cm}
\setlength{\belowcaptionskip}{-1cm}
\centering
\subfigure[Loss with Different P]
{
    \label{figure:loss_diff_p_iid2}
    \begin{minipage}[b]{\picRatio\linewidth}
        \centering
        \includegraphics[scale=\accLossScale]{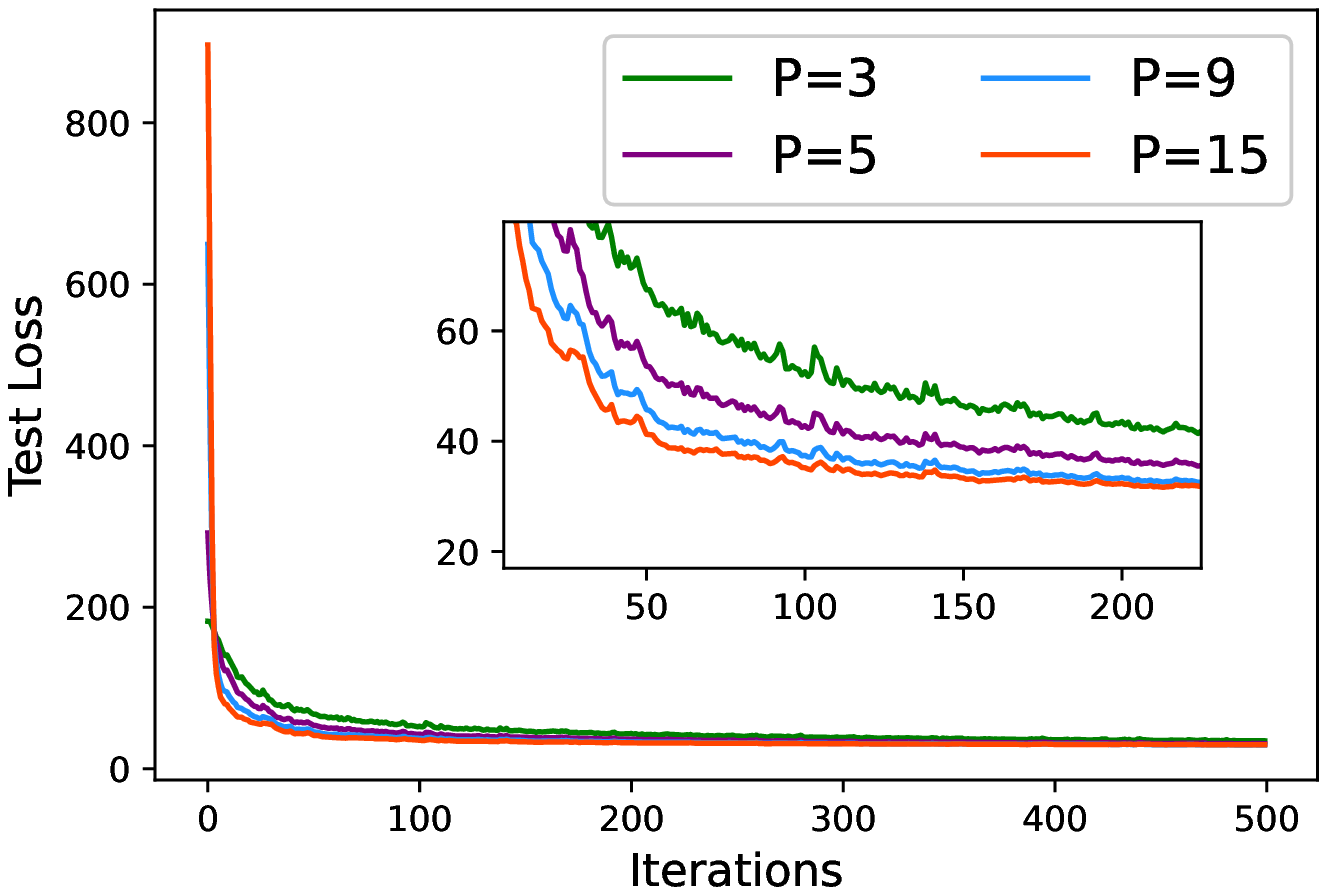} 
    \end{minipage}
}
\subfigure[Loss with 10 Devices]
{
    \label{figure:loss_size_10_iid2}
    \begin{minipage}[b]{\picRatio\linewidth}
        \centering
        \includegraphics[scale=\accLossScale]{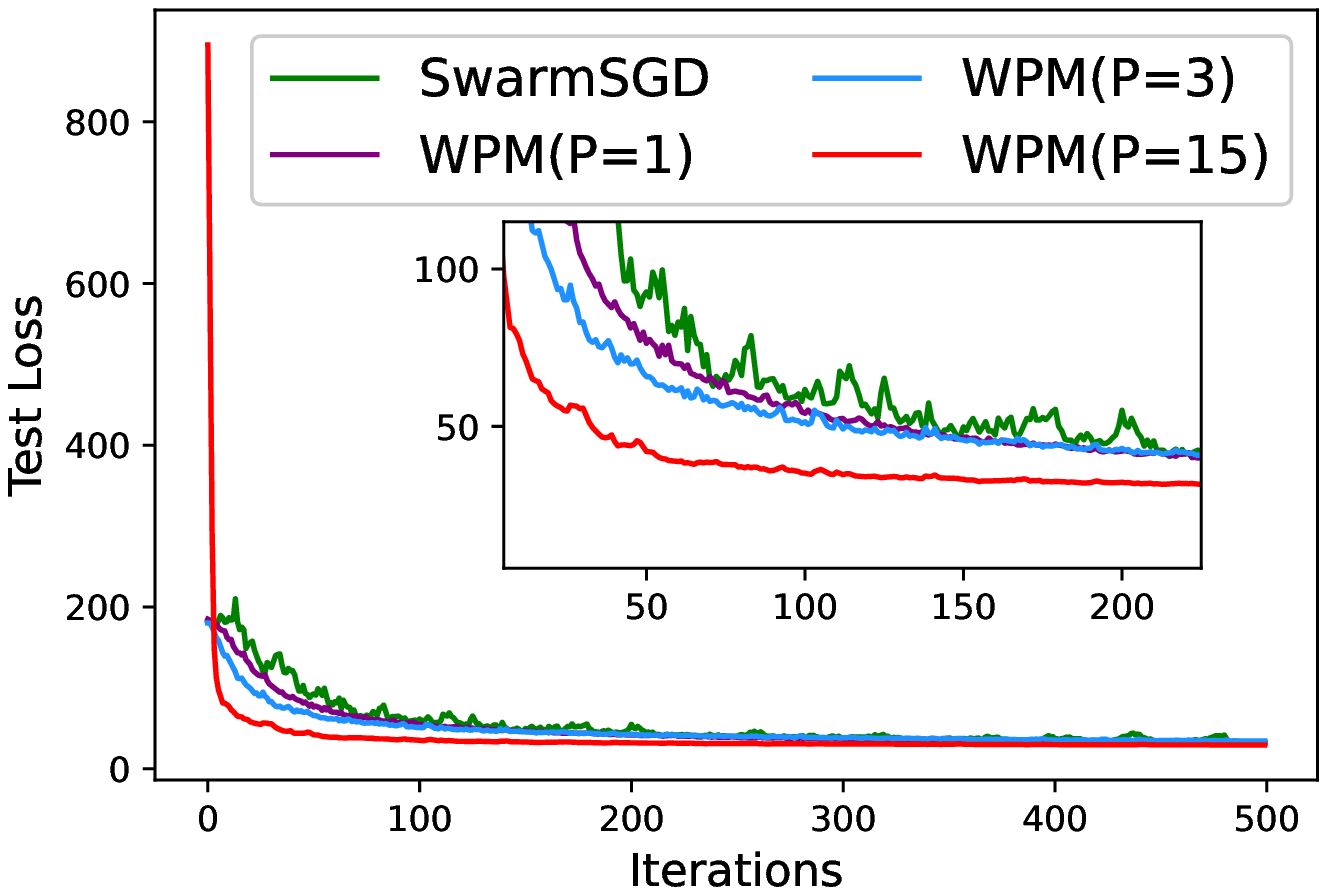}
    \end{minipage}
}
\subfigure[Loss with 20 Devices]
{
    \label{figure:loss_size_20_iid2}
    \begin{minipage}[b]{\picRatio\linewidth}
        \centering
        \includegraphics[scale=\accLossScale]{loss_LR_MNIST_non-iid.eps}
    \end{minipage}
}
\subfigure[Loss with 100 Devices]
{
    \label{figure:loss_size_100_iid2}
    \begin{minipage}[b]{\picRatio\linewidth}
        \centering
        \includegraphics[scale=\accLossScale]{loss_LR_MNIST_non-iid.eps}
    \end{minipage}
}
\caption{
Loss Comparison on Non-i.i.d Setting with Devices $\in \{10, 15, 20\}$}
\label{fig:loss2}
\end{figure*}

\begin{figure*}[!tbp]
\setlength{\abovecaptionskip}{-1cm}
\setlength{\belowcaptionskip}{-1cm}
\centering
\subfigure[Convergence Speed with Different P]
{
    \label{fig:con_diff_P_iid}
    \begin{minipage}[b]{\picRatio\linewidth}
        \centering
        \includegraphics[scale=\convergenceSpeedScale]{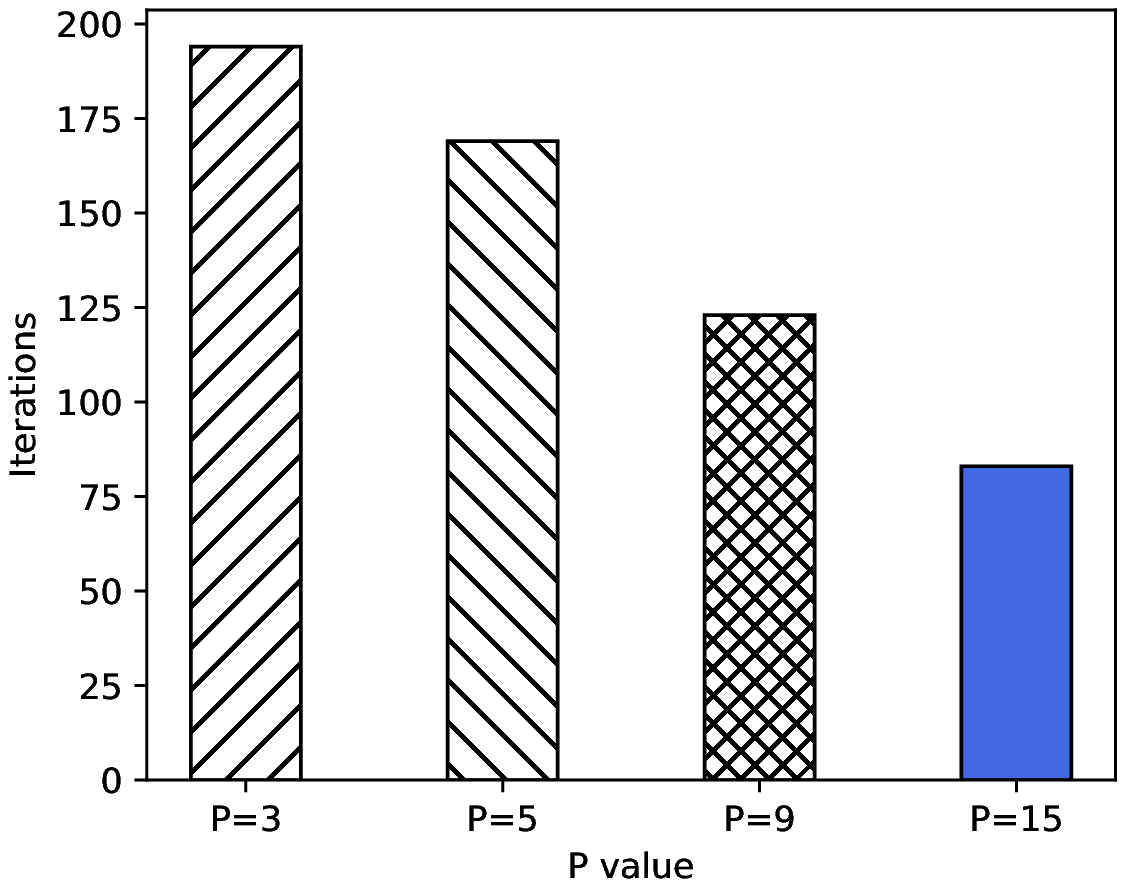}
    \end{minipage}
}
\subfigure[Convergence Speed with 10 Devices]
{
    \label{fig:con_size_10_iid}
    \begin{minipage}[b]{\picRatio\linewidth}
        \centering
        \includegraphics[scale=\convergenceSpeedScale]{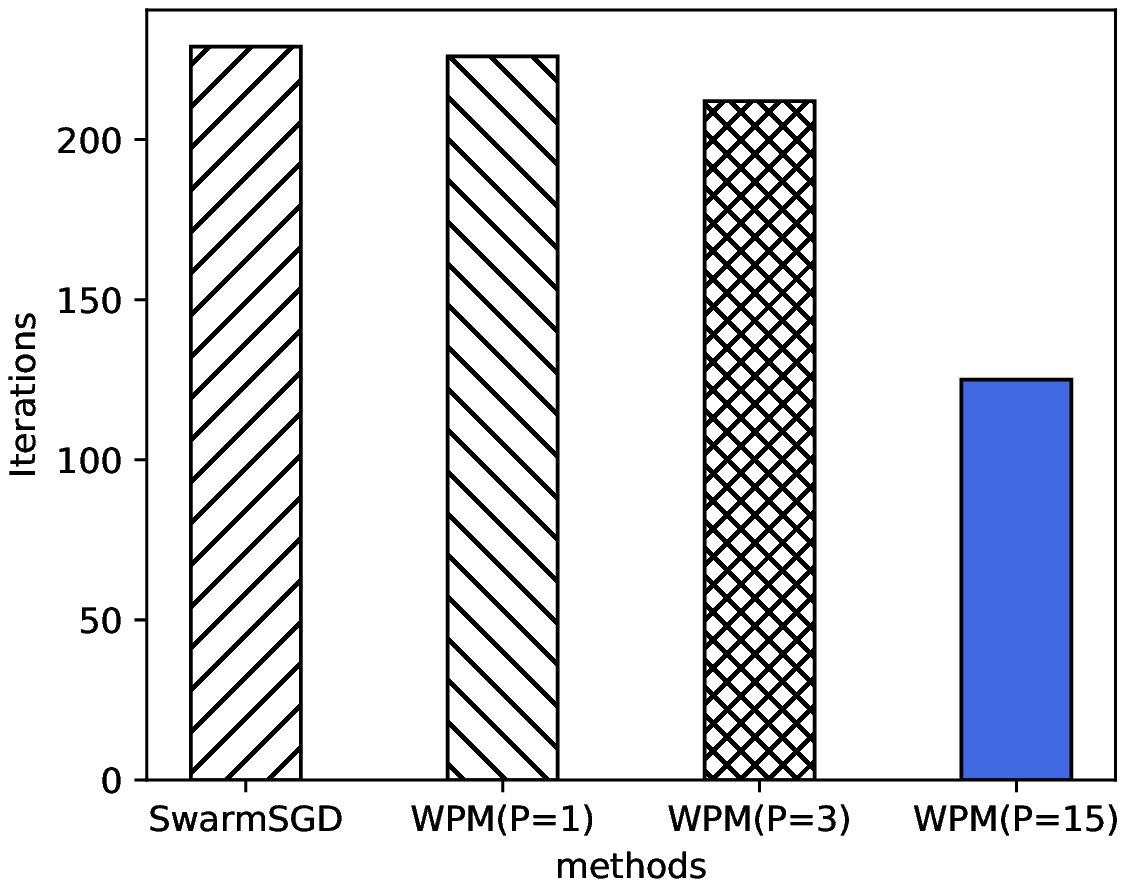} 
    \end{minipage}
}
\subfigure[Convergence Speed with 20 Devices]
{
    \label{fig:con_size_20_iid}
    \begin{minipage}[b]{\picRatio\linewidth}
        \centering
        \includegraphics[scale=\convergenceSpeedScale]{con-speed_LR_MNIST_iid.eps}
    \end{minipage}
}
\subfigure[Convergence Speed with 100 Devices]
{
    \label{fig:con_size_100_iid}
    \begin{minipage}[b]{\picRatio\linewidth}
        \centering
        \includegraphics[scale=\convergenceSpeedScale]{con-speed_LR_MNIST_iid.eps}
    \end{minipage}
}
\caption{
Convergence Speed Comparision on i.i.d Setting with Devices $\in \{10, 15, 20\}$}
\label{fig:convergence_speed}
\end{figure*}

\begin{figure*}[!tbp]
\setlength{\abovecaptionskip}{-1cm}
\setlength{\belowcaptionskip}{-1cm}
\centering
\subfigure[Convergence Speed with Different P]
{
    \label{fig:con_diff_P_noniid}
    \begin{minipage}[b]{\picRatio\linewidth}
        \centering
        \includegraphics[scale=\convergenceSpeedScale]{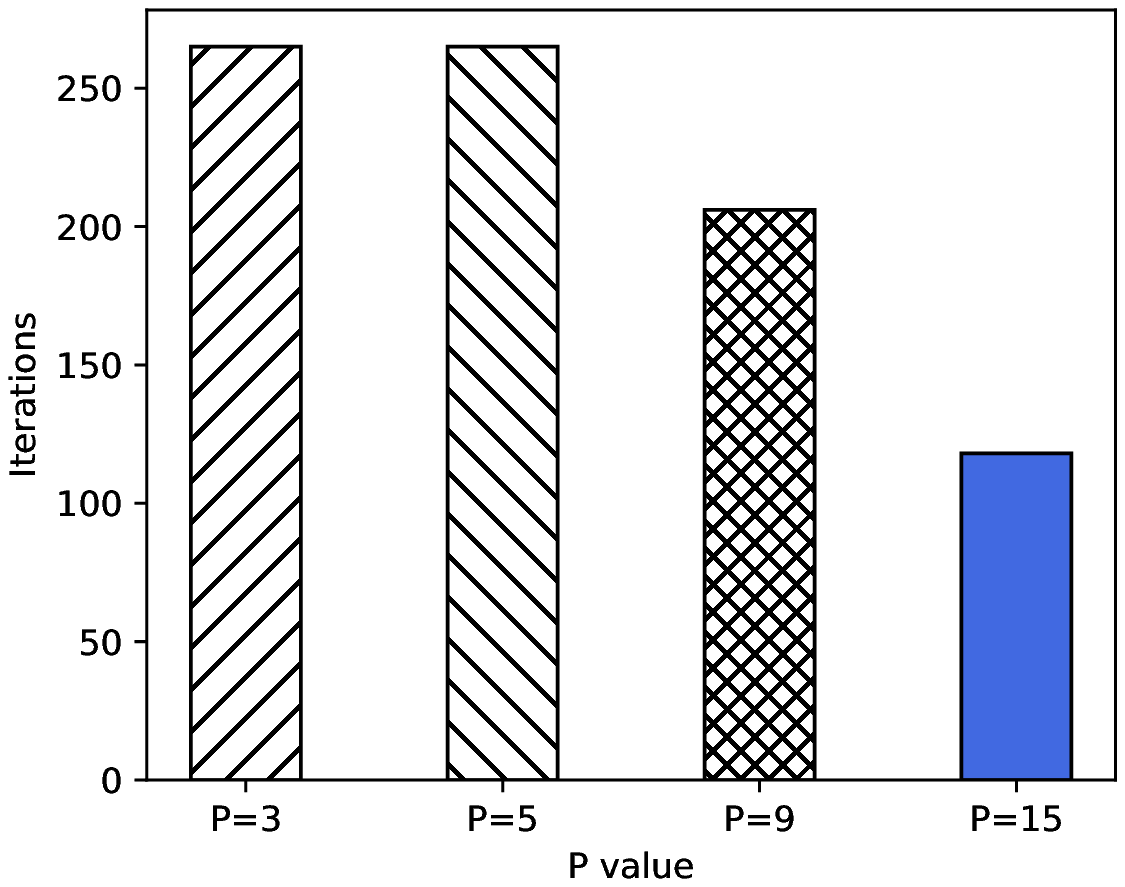} 
    \end{minipage}
}
\subfigure[Convergence Speed with 10 Devices]
{
    \label{fig:con_size_10_noniid}
    \begin{minipage}[b]{\picRatio\linewidth}
        \centering
        \includegraphics[scale=\convergenceSpeedScale]{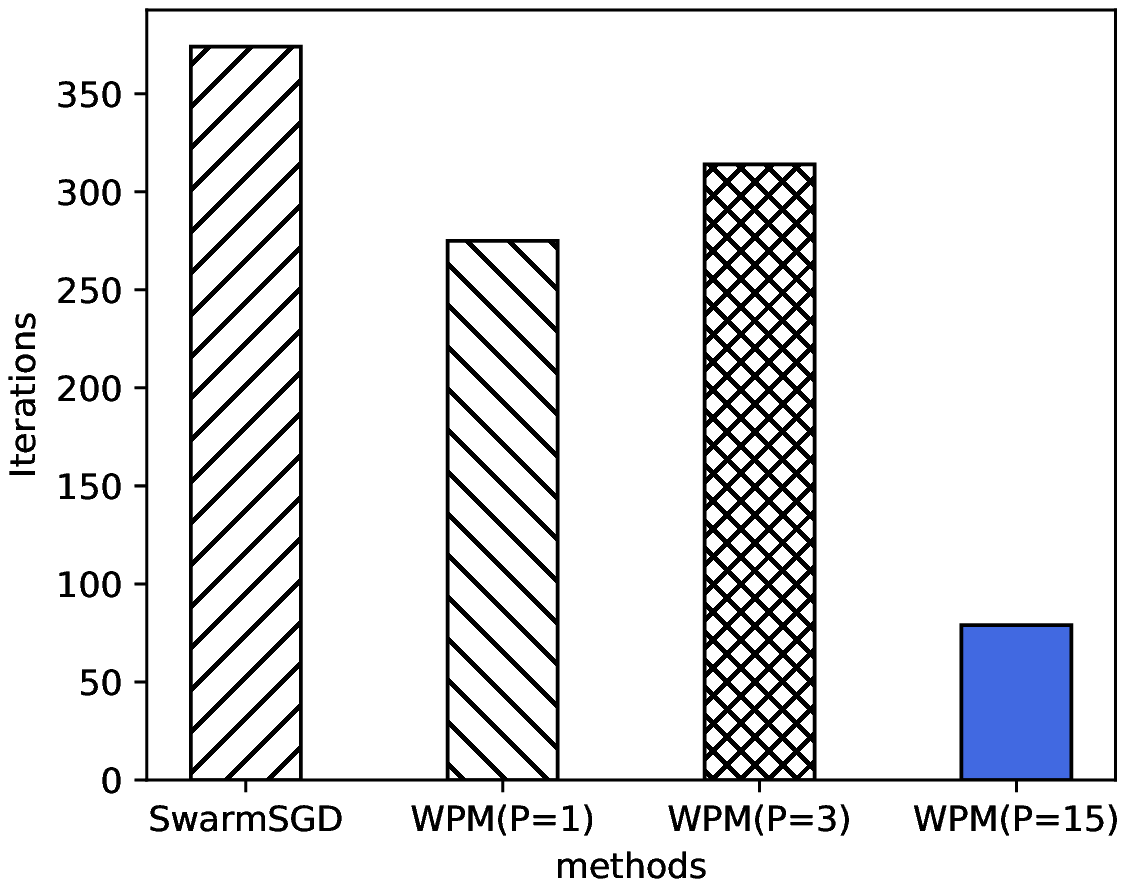} 
    \end{minipage}
}
\subfigure[Convergence Speed with 20 Devices]
{
    \label{fig:con_size_20_noniid}
    \begin{minipage}[b]{\picRatio\linewidth}
        \centering
        \includegraphics[scale=\convergenceSpeedScale]{con-speed_LR_MNIST_non-iid.eps}
    \end{minipage}
}
\subfigure[Convergence Speed with 100 Devices]
{
    \label{fig:con_size_100_noniid}
    \label{fig:con}
    \begin{minipage}[b]{\picRatio\linewidth}
        \centering
        \includegraphics[scale=\convergenceSpeedScale]{con-speed_LR_MNIST_non-iid.eps}
    \end{minipage}
}
\caption{
Convergence Speed Comparison on Non-i.i.d Setting with Devices $\in \{10, 15, 20\}$}
\label{fig:my_labe2l}
\end{figure*}

\section{Evaluations}
\label{sec:evaluations}

\subsection{Experimental Setup}
\label{subsec:settings}
The experimental platform is composed of 8 Nvidia Tesla T4 GPUs, 4 Intel XEON CPUs and 256GB memory. We build the experimental environment based on Ray \cite{ray2018}, an open-source framework that provides universal API for building high performance distributed applications.

\para{{Topology.}}
\label{subsubsec:topology}
We randomly generated fixed per-iteration-changing topology sequences to simulate the time-varying networks. Each element of a sequence is a randomly generated symmetric adjacent matrix of undirected graph topology.
We use $density$ to measure the proportion of non-zero elements in upper triangle of the adjacent matrix (\ie, the proportion of available connections in the topology). 
The generated topology is a fully connected topology when the $density = 1$, or a ring topology when the $density = 0.1$.

\para{Datasets and Models.}
\label{subsubsec:datasets_models}
We performed our experiment on the classic MNIST \cite{lecun1998gradient} and CIFAR-10 \cite{krizhevsky2009learning} datasets. 
Both are constructed for image classification tasks.
We use two typical convex models, the Logistic Regression (LR) model \cite{hosmer2013applied} and the support vector machines (SVM) \cite{hu2020robust} for our experiment. Specifically, We select the LR model for the MNIST, and SVM for CIFAR-10. 

\para{Baselines.}
\label{subsubsec:Baselines_Setup}
We compared the \sys{} with $P=1$ methods and SwarmSGD \cite{2019SwarmSGD} over time-varying topologyies. In order to provide a fair comparison between all three methods, we use the same dynamic topology sequences to conduct the experiment. And we made some minor adjustments for each baseline method.
\begin{itemize}
    \item For the {$P=1$}, we define it as a class of methods using a linear aggregation function~\cite{2017DPSGD,nedic2008distributed, neglia2020decentralized}.
    All of these methods are a special case of \sys{}.
    \item For the \textbf{SwarmSGD}, we set the number of local SGD updates equal to 1, where the selected pair of devices perform only one single local SGD update before aggregation.
\end{itemize}
\para{Data Partition.}
\label{subsubsec:datasets_partition}
We consider two data distributions scenarios \cite{pmlr-v139-esfandiari21a}.
For the IID setting, the training dataset is equally divided into $M$ random sub-datasets, and assigned to $M$ devices. 
For the Non-IID setting, we define an extreme case where the data on different devices are generated from different distributions (\ie{} each device hold only one label of data).



\para{{Metrics.}}
\label{subsubsec:performance_metrices}
To measure the performance of WPM, we study three metrics in the experiments.
\begin{itemize}

	\item \textbf{Model Accuracy.} This measures the proportion between the amount of incorrect data by the model and that of all data. Assume the number of samples at each device $i$ is $\chi_i$ and the number of right measured samples is $\chi_i(r)$. The accuracy of the \sys{} is measured as $\frac{\chi_i(r)}{\chi_i}$ at device $i$. The average accuracy is computed as $\frac{1}{\mathcal{M}} \sum_{i=1}^m \frac{\chi_i(r)}{\chi_i}$.
	\item \textbf{Test Loss.} This metric measures the quantification difference of probability distributions between model outputs and observation results. 
	
	\item \textbf{Convergence Speed.} 
	We record the accuracy of methods at each iteration and then relate the accuracy to the number of iterations to study how many iterations are required to achieve a specific loss or accuracy. 
	\end{itemize}

\para{\textbf{Hyperparameters.}}
\label{subsubsec:hyperparameters_setting}
For all of experiments, the learning rate and batch size are fixed at $0.01$ and $128$.
We randomly generate dynamic topology sequences with devices of 10, 15 and 20, and conduct the evaluation of total 500  iterations.

\subsection{Experiment Results}
\label{subsec:SExperiment Results}


\para{Vertical Comparison.}
\label{subsubsec:different_P_comparison}
To study how different $P$ affects the performance of WPM over time-varying networks. We evaluate WPM with different $P \in \{3, 5, 9, 15\}$, respectively, and choose LR model on MNIST dataset. 
We empirically choose $P=15$ as the max $P$ value.
Due to the operation of power mean with $P$, it results in the model parameters extremely small before gradient descent, which may lead to the vanishing gradient problem. Moreover, the computation complexity exponentially increased with larger $P$. At the same time, we found the computation cost unacceptable after $P>15$ in the experiment.

In the Figure \ref{fig:con_diff_P_iid} and Figure \ref{fig:con_diff_P_noniid}, results show that WPM has better performance in all metrics as the $P$ increases.
The larger $P$ averagely accelerated the convergence speed up to 30\%, at the same time slightly improved the model accuracy and loss. With the extreme Non-IID setting, the convergence speed significantly accelerated up to 62.03\% than the baseline.

\para{Horizontal Comparison.}
\label{subsubsec:Horizontal Comparison}
We use \sys{} ($P=3$) and \sys{} ($P=15$) compare with $P=1$ and SwarmSGD.
Results in Figure \ref{fig:con_size_10_iid} and Figure \ref{fig:con_size_10_noniid} indicate that \sys{} ($P=15$) achieves the fastest convergence speed in IID and Non-IID distributions. 
And also slightly improved accuracy and loss as shown in Figure \ref{figure:acc_size_10_iid} and Figure \ref{figure:acc_size_10_noniid}. 
Due to the extreme distinct Non-IID data setting, all methods perform worse than IID setting. This also caused the sharply loss increase for \sys{} at the beginning of the training process. However, \sys{} ($P=15$) still keep the best accuracy and loss, while other baselines especially the SwarmSGD struggles significantly because its less model aggregation rounds.

\para{Scalability.}
\label{subsubsec:Scalability_Comparison}
To analyze the performance and scalability of all methods under different scales, each method is performed with topology sequences of 10, 20, and 100 devices.
The results are illustrated in Figure \ref{fig:con_size_20_iid}, Figure \ref{fig:con_size_100_iid}, Figure \ref{fig:con_size_20_noniid}, Figure \ref{fig:con_size_20_noniid} and Figure \ref{fig:con_size_100_noniid}. 
Vertically, as the number of devices increased to 20, the convergence speed of \sys{} is accelerated by 10.13\% with IID setting, 13.74\% with Non-IID setting, while keep a similar accuracy. 
Horizontally, when comparing with DPSGD under the scale of 20 devices, \sys{} converge 25.26\% faster with IID setting, and significantly 57.99\% faster with Non-IID setting. We present all the detailed data in Table \ref{tab:scale compare table} of Appendix.

\para{\textbf{Found Lessons.}}
\label{subsubsec:hyperparameters_setting2}
We found our proposed \sys{} is very sensitive to the learning rate with mirror gradient descent. 
We use a learning rate scaling factor to keep the corresponding learning rate at the same magnitude with model parameters.
In our previous experiments, when we use an inappropriate learning rate scaling factor liking $  \frac{\mid(\overline{x}_i^t)^p \mid}{\mid f(\overline{x}_i^t) \mid}  $ for mirror descent. 
The WPM could not work under an inadequate learning rate.

		

\section{Conclusion}
\label{sec:con}
We study the important problem of DML over time-varying networks. We propose a novel non-linear mechanism that takes weighted power-$p$ means on the local models, where $p$ is a positive odd integer. To our best knowledge, It is the first time to do non-linear aggregation in DML. We rigorously prove its convergence. Experimental results show that our mechanism significantly outperforms the common linear weighted aggregation over time-varying networks, in terms of convergence speed and scalability.

\bibliography{ref}
\bibliographystyle{IEEEtran}

\clearpage
\onecolumn
{
}


\section{Appendix}
\label{sec:appendix}

\subsection{Proof of Claim~\ref{claim:lower} (Lower Bound)}
\begin{proof}
Substitute $a=x-y, b=x+y$. Then, this becomes
\begin{equation}
 2^p \left|\left(\frac{a+b}{2}\right)^p + \left(\frac{a-b}{2}\right)^p\right|\geq 2|a|^p.   
\end{equation}
Without loss of generality set $a>0$, and then it suffices to verify
\begin{equation}
 (a+b)^p + (a-b)^p \geq 2a^p.   
\end{equation}
This final inequality is true because after cancellation, the only terms left on the LHS are of the form $a^{p-2k}b^{2k}$, so the inequality is true.
\end{proof}

\subsection{Proof of Claim~\ref{claim:stronk}}

\begin{proof}
$\sigma_p = \frac{1}{2^{p-1}}$ works. 
Holding $b$ constant, the derivative of the LHS with respect to $a$ is $a^p - b^p$ and the derivative of the RHS with respect to $a$ is $\sigma_p (a-b)^p$. By the previous Lemma, this is true with $\sigma_p = \frac{1}{2^{p-1}}$, since the magnitude of the LHS's derivative is larger, and for $a<b$ both derivatives are negative, and for $a>b$ both derivatives are positive.
\end{proof}

\subsection{Additional Experimental Results}
\label{sub:add_eval_results}

Other detailed performance data were shown in Table \ref{tab:convergence speed table}.
The scale experiment shown in Table \ref{tab:scale compare table}.
We also recorded the consensus distance in Tab \ref{tab:consensus}.

\begin{table*}[ht]
    \scriptsize
    \centering
    \caption{Accuracy Comparision of DPSGD and WPM with Devices $\in \{10, 15, 20\}$}
    \begin{tabular}{|p{\tableCellWidth}|p{\tableCellWidth}|p{\tableCellWidth}|p{\tableCellWidth}|p{\tableCellWidth}|p{\tableCellWidth}|p{\tableCellWidth}|}
     \hline \makecell[c]{\multirow{2}{*}{Devices}} & \multicolumn{3}{|c|}{i.i.d} & \multicolumn{3}{|c|}{Non-i.i.d} \\

     \cline{2-4} \cline{5-7} & \makecell[c]{500\\iterations} & \makecell[c]{Max} & \makecell[c]{Average} & \makecell[c]{500\\iterations} & \makecell[c]{Max} & \makecell[c]{Average} \\
     
     \hline \multicolumn{7}{|c|}{WPM(P=1)} \\
 
     \hline \makecell[c]{10} & \makecell[c]{90.15\%} & \makecell[c]{90.15\%} & \makecell[c]{86.80\%} & \makecell[c]{89.58\%} & \makecell[c]{89.77\%} & \makecell[c]{83.56\%}\\
     
     \makecell[c]{15} & \makecell[c]{90.17\%\\(0.02\%)} & \makecell[c]{90.17\%\\(0.02\%)} & \makecell[c]{86.91\%\\(0.13\%)} & \makecell[c]{88.54\%\\(-1.16\%)} & \makecell[c]{88.8\%\\(-1.08\%)} & \makecell[c]{82.41\%\\(-1.38\%)}\\
     
     \makecell[c]{20} & \makecell[c]{90.1\%\\(-0.06\%)} & \makecell[c]{90.1\%\\(-0.06\%)} & \makecell[c]{86.85\%\\(0.06\%)} & \makecell[c]{89.94\%\\(0.4\%)} & \makecell[c]{90.03\%\\(0.29\%)} & \makecell[c]{85.41\%\\(2.21\%)}\\

    \hline \multicolumn{7}{|c|}{WPM(P=15)} \\
    
    \hline \makecell[c]{10} & \makecell[c]{90.92\%} & \makecell[c]{90.95\%} & \makecell[c]{88.82\%} & \makecell[c]{89.28\%} & \makecell[c]{89.3\%} & \makecell[c]{86.84\%}\\
     \hline \makecell[c]{15} & \makecell[c]{90.93\%\\(0.01\%)} & \makecell[c]{90.95\%\\(0\%)} & \makecell[c]{88.74\%\\(-0.08\%)} & \makecell[c]{88.84\%\\(-0.49\%)} & \makecell[c]{88.91\%\\(-0.44\%)} & \makecell[c]{86.89\%\\(0.06\%)}\\
     \hline \makecell[c]{20} & \makecell[c]{90.93\%\\(0.01\%)} & \makecell[c]{90.93\%\\(-0.02\%)} & \makecell[c]{88.75\%\\(-0.08\%)} & \makecell[c]{89.4\%\\(0.13\%)} & \makecell[c]{89.4\%\\(0.11\%)} & \makecell[c]{87.43\%\\(0.68\%)}\\

    \hline
    \end{tabular}
    \label{tab:scaleWPM}
    \vspace{-1em}
\end{table*}

\begin{table*}[!bhp]
    \scriptsize
    \centering
    \caption{Convergence Speed Comparison with Different P}
    \begin{tabular}{|p{\conSpeedCellWidth}|p{\conSpeedCellWidth}|p{\conSpeedCellWidth}|p{\conSpeedCellWidth}|p{\conSpeedCellWidth}|p{\conSpeedCellWidth}|}
         
        \hline  \makecell[c]{Dataset} & \makecell[c]{WPM(P=1)} & \makecell[c]{WPM(P=3)} & \makecell[c]{WPM(P=5)} & \makecell[c]{WPM(P=9)} & \makecell[c]{WPM(P=15)} \\
        
        \hline \makecell[c]{i.i.d} & \makecell[c]{189} & \makecell[c]{189 \\(0\%)} & \makecell[c]{174\\(7.94\%)} & \makecell[c]{154\\(18.52\%)} & \makecell[c]{143\\(24.34\%)} \\
        
        \hline \makecell[c]{Non-i.i.d} & \makecell[c]{345} & \makecell[c]{332\\(3.77\%)} & \makecell[c]{224\\(35.07\%)} & \makecell[c]{213\\(38.26\%)} & \textbf{\makecell[c]{131\\(62.03\%)}} \\
         
         \hline
         
    \end{tabular}
    \label{tab:convergence speed table}
    \vspace{-1em}
\end{table*}

\begin{table*}[ht]
    \scriptsize
    \centering
    \caption{Convergence Speed Comparison with Scalability}
    \begin{tabular}{|p{\tableCellWidth}|p{\tableCellWidth}|p{\tableCellWidth}|p{\tableCellWidth}|p{\tableCellWidth}|p{\tableCellWidth}|p{\tableCellWidth}|}
     \hline \multirow{2}{*}{Methods} & \multicolumn{3}{|c|}{i.i.d} & \multicolumn{3}{|c|}{Non-i.i.d} \\
      \cline{2-4} \cline{5-7} & \makecell[c]{10} & \makecell[c]{15} & \makecell[c]{20} & \makecell[c]{10} & \makecell[c]{15} & \makecell[c]{20}  \\
    \hline WPM(P=1) & \makecell[c]{189} & \makecell[c]{189\\(0\%)} & \makecell[c]{190\\(-0.53\%)} & \makecell[c]{158} & \makecell[c]{158\\(0\%)} & \makecell[c]{142\\(10.13\%)}\\
    
    \hline WPM(P=15) & \makecell[c]{284} &  \makecell[c]{311\\(-9.51\%)} & \makecell[c]{269\\(5.28\%)} & \makecell[c]{131} &  \makecell[c]{127\\(3.05\%)} & \makecell[c]{113\\(13.74\%)}\\
    
    \hline
    \end{tabular}
    \label{tab:scale compare table}
    \vspace{-1em}
\end{table*}

\begin{table*}[!bhp]
    \scriptsize
    \centering
    \caption{Analysis of Consensus Distance  with Different P}
    
    \begin{tabular}{|p{\disTableCellWidth}|p{\disTableCellWidth}|p{\disTableCellWidth}|p{\disTableCellWidth}|p{\disTableCellWidth}|p{\disTableCellWidth}|p{\disTableCellWidth}|p{\disTableCellWidth}|p{\disTableCellWidth}|p{\disTableCellWidth}|p{\disTableCellWidth}|}
         
     \hline 
     
     \makecell[c]{\multirow{2}{*}{P Value}} & \multicolumn{5}{|c|}{i.i.d} & \multicolumn{5}{|c|}{Non-i.i.d} \\
     \cline{2-6} \cline{7-11} & \makecell[c]{100} & \makecell[c]{200} & \makecell[c]{300} & \makecell[c]{400} & \makecell[c]{500}
     & \makecell[c]{100} & \makecell[c]{200} & \makecell[c]{300} & \makecell[c]{400} & \makecell[c]{500} \\
     
     \hline \makecell[c]{3} & \makecell[c]{0.084} & \makecell[c]{0.074} & \makecell[c]{0.066} & \makecell[c]{0.058} & \makecell[c]{0.054}
     & \makecell[c]{0.179} & \makecell[c]{0.157} & \makecell[c]{0.140} & \makecell[c]{0.058} & \makecell[c]{0.054} \\
     
     \hline \makecell[c]{5} & \makecell[c]{0.311} & \makecell[c]{0.243} & \makecell[c]{0.201} & \makecell[c]{0.198} & \makecell[c]{0.199}
     & \makecell[c]{0.311} & \makecell[c]{0.243} & \makecell[c]{0.201} & \makecell[c]{0.123} & \makecell[c]{0.121} \\
     
     \hline \makecell[c]{9} & \makecell[c]{0.746} & \makecell[c]{0.601} & \makecell[c]{0.530} & \makecell[c]{0.519} & \makecell[c]{0.453}
     & \makecell[c]{0.225} & \makecell[c]{0.177} & \makecell[c]{0.155} & \makecell[c]{0.143} & \makecell[c]{0.139} \\
     
     \hline \makecell[c]{15} &\makecell[c]{1.250} & \makecell[c]{0.992} & \makecell[c]{0.969} & \makecell[c]{0.889} & \makecell[c]{0.877}
     & \makecell[c]{0.624} & \makecell[c]{0.433} & \makecell[c]{0.391} & \makecell[c]{0.282} & \makecell[c]{0.304} \\
         
    \hline
    \end{tabular}
    \label{tab:consensus}
\end{table*}

\end{document}